\newtheorem{theorem}{Theorem}
\newtheorem{corollary}{Corollary}
\newtheorem{lemma}{Lemma}
\DeclareMathOperator{\N}{\texttt{N}}
\DeclareMathOperator{\adapt}{adapt}
\DeclareMathOperator{\approved}{approved}
\DeclareMathOperator{\prespec}{pres}
\DeclareMathOperator{\Var}{Var}
\title{Sequential algorithmic modification with test data reuse}
\author[1]{Jean Feng}
\author[2]{Gene Pennello}
\author[2]{Nicholas Petrick}
\author[2]{Berkman Sahiner}
\author[3]{Romain Pirracchio}
\author[2]{Alexej Gossmann}
\affil[1]{%
    Department of Epidemiology and Biostatistics\\
    University of California, San Francisco
}
\affil[2]{%
    U.S. Food and Drug Administration
}
\affil[3]{%
    Department of Anesthesiology\\
    University of California, San Francisco
  }
\begin{document}
\maketitle

\begin{abstract}
\vspace{-0.2cm}
After initial release of a machine learning algorithm, the model can be fine-tuned by retraining on subsequently gathered data, adding newly discovered features, or more.
Each modification introduces a risk of deteriorating performance and must be validated on a test dataset.
It may not always be practical to assemble a new dataset for testing each modification, especially when most modifications are minor or are implemented in rapid succession.
Recent works have shown how one can repeatedly test modifications on the same dataset and protect against overfitting by (i) discretizing test results along a grid and (ii) applying a Bonferroni correction to adjust for the total number of modifications considered by an adaptive developer.
However, the standard Bonferroni correction is overly conservative when most modifications are beneficial and/or highly correlated.
This work investigates more powerful approaches using alpha-recycling and sequentially-rejective graphical procedures (SRGPs).
We introduce novel extensions that account for correlation between adaptively chosen algorithmic modifications.
In empirical analyses, the SRGPs control the error rate of approving unacceptable modifications and approve a substantially higher number of beneficial modifications than previous approaches.
\end{abstract}

\vspace{-0.3cm}
\section{Introduction}
\label{sec:intro}
\vspace{-0.2cm}

Before a machine learning (ML) algorithm is approved for deployment, its performance is usually evaluated on an independent test dataset.
If the ML algorithm is modified over time, its performance may change.
There are no guarantees on \textit{how} the performance may evolve when the model developer is allowed to introduce modifications in an unconstrained manner.
For instance, algorithmic modifications that reduce computational costs may negatively impact model accuracy or precision, and improvements along an aggregate performance metric may come at the cost of worse performance for certain minority subgroups and exacerbate issues of algorithmic fairness.
To check that a proposed modification is acceptable for deployment, the current approach is to run a hypothesis test on a new test dataset, separate from the original one \citep{Feng2020-ev}.
The null hypothesis is that the modification is not acceptable; a modification is approved if we successfully reject the null.
Nevertheless, large high-quality test datasets are often hard to acquire, particularly in the medical setting.
A major motivation for this work comes from the FDA's recent interest in letting medical device developers update ML-based software, while still ensuring its safety and effectiveness \citep{US_Food_and_Drug_Administration2019-kt}.

When labeled data are expensive and/or difficult to collect, it is tempting to reuse an existing test dataset for determining the acceptability of an algorithmic modification.
The danger of test data reuse is that the model developer can learn aspects of the test data when it is used in a sequential and adaptive manner, creating dependencies between algorithmic modifications and the holdout data.
For instance, the model developer may inadvertently incorporate spurious correlations in the test data to attain over-optimistic performance estimates.
This feedback loop introduces bias to the performance evaluation procedure, and adaptively defined hypothesis tests can have drastically inflated Type I error rates \citep{Gelman2017-kh, Thompson2020-ut}.

Recent works protect against inappropriate test data reuse and overfitting by reducing the amount of information released by the testing procedure \citep{Russo2016-ms}.
The two main approaches are to either coarsen the test outputs along a grid of values \citep{Blum2015-hv, Rogers2019-br} or to perturb the test results with random noise using differential privacy techniques \citep{Dwork2015-da, Feldman2018-eo}.
However, existing methods require immensely large datasets to provide protection against overfitting with theoretical guarantees \citep{Rogers2019-br}.
Our aim is to design valid test data reuse procedures for smaller sample sizes that still have sufficiently high power to approve good algorithmic modifications.
Our focus is on methods that coarsen the test results.
In fact, we consider the extreme case of coarsening where the procedure releases a single bit of information, e.g. whether or not the modification was approved.

When test results are coarsened, the adaptive modification strategy can be described as a tree.
As such, one can view the test data reuse problem as a multiple hypothesis testing problem: If we control the family-wise error rate across the entire tree, we control the probability of approving one or more unacceptable modifications.
Existing procedures perform a Bonferroni correction with respect to the size of this tree \citep{Blum2015-hv, Rogers2019-br}.
Nevertheless, the Bonferroni correction is known to be conservative.
Instead, we can gain significant power using alpha-recycling \citep{Burman2009-pp} and accounting for correlation between test statistics \citep{Westfall1993-ac}.
Indeed, we expect algorithmic modifications to be highly correlated when there is significant overlap between their training data and similarities in their training procedures.

In this paper, we design valid test data reuse procedures based on sequentially rejective graphical procedures (SRGPs) \citep{Bretz2009-bt, Bretz2011-hd, Bretz2011-mu}.
Although SRGPs are a well-established technique for testing many \textit{pre-specified} hypotheses, many of these procedures cannot be applied when the hypotheses are \textit{adaptively defined in sequence}.
The main challenge is that many nodes in the tree of hypotheses are not observed.
As such, we introduce two novel SRGPs that are able to account for correlation between adaptively-defined algorithmic modifications without needing to observe these ``counterfactual'' hypotheses.
The first procedure accounts for correlation between observed nodes in the tree using a fixed-sequence testing procedure.
The second procedure is based on the fact that analysts are not adversarial in practice, i.e. they will not purposefully use prior results to overfit to the test data \citep{Mania2019-pt, Zrnic2019-lt}.
We leverage this fact by requiring the model developer to pre-specify a hypothetical online learning procedure.
The SRGP then utilizes the similarity between the adaptive and pre-specified modifications to improve testing power.
Both methods can be applied to test black-box ML algorithms and, importantly, are suitable for smaller sample sizes, e.g. those commonly found in medical settings.
In empirical analyses, our procedures protect against overfitting to the test data and approve a higher proportion of acceptable modifications than existing approaches.
Code will be publicly available on Github.

\vspace{-0.3cm}
\section{Problem Setup}
\label{sec:mtp}
\vspace{-0.1cm}
Suppose the test dataset is composed of $n$ independently and identically distributed (IID) observations $(X_1, Y_1) (X_2, Y_2), \ldots, (X_n, Y_n) \in \mathcal{X}\times\mathcal{Y}$ drawn from the target population.
Consider a model developer who adaptively proposes a sequence of $T$ algorithmic modifications $\{\hat{f}_1^{\adapt},...,\hat{f}_T^{\adapt}\}$, where each modification is a model that predicts some value in $\mathcal{Y}$ given input $X$.
Given criteria for defining the acceptability of a modification, our goal is to approve as many acceptable modifications as possible while controlling the probability of approving an unacceptable modification.
Because the decision to approve a modification can be framed as a hypothesis test, a procedure for approving adaptively-defined modifications is equivalent to testing a sequence of adaptively-defined hypotheses $H_{1}^{\adapt}, ..., H_{T}^{\adapt}$.
Moreover, control of the online family-wise error rate (FWER) in the strong sense, i.e. for any configuration of the null hypotheses, implies control over of the rate of approving at least one unacceptable modification.

There are various ways to define acceptability and their corresponding hypothesis test.
For example, we may define a modification $\hat{f}$ to be acceptable as long as its expected loss is smaller than that of the original model $\hat{f}_0$.
So given a real-valued loss function $\ell$, we would test the hypothesis
\begin{align*}
	H_{j}^{\adapt}: \mathbb{E}\left(\ell\left(\hat{f}_j^{\adapt}(X), Y \right)\right) \ge  \mathbb{E}\left(\ell\left(\hat{f}_0(X), Y \right)\right)
\end{align*}
at each iteration $j = 1,...,T$.
If we require monotonic improvement in the model performance, we can test if the $j$-th modification is superior to the most recently approved modification $\hat{f}_{j}^{\approved}(X)$, i.e.
\begin{align*}
	H_{0}^{\adapt}:
	\mathbb{E}\left(\ell\left(\hat{f}_{j}^{\adapt}(X), Y \right)\right) \ge  \mathbb{E}\left(\ell\left(\hat{f}_{j}^{\approved}(X), Y \right)\right).
\end{align*}
Finally, one may also consider multidimensional characterizations of model performance (e.g. model performance within subgroups) and define acceptability as a combination of superiority and non-inferiority tests \citep{Feng2020-ev}.
The testing procedures described below only depend on the p-values, so we leave the specific definition of acceptability unspecified until the experimental section.

To limit the adaptivity of the model developer, we consider procedures that sequentially release a single bit of information for each test: One means that the modification is approved and zero means it is not.
Consequently, modifications proposed by any adaptive strategy can be described as a bifurcating tree, where $\hat{f}_{a_t}$ is the \textit{nonadaptive} model tested at time $t$ for the history of approvals $a_t \in \{0,1\}^{t-1}$, $H_{a_t}$ is the associated \textit{nonadaptive} hypothesis test, and $p_{a_t}$ is its marginal p-value.
While one can regard this set of $(2^T - 1)$ hypotheses tests as prespecified, we are only able to observe a specific path along this tree.
The unobserved hypotheses are counterfactuals.
As such, we need a multiple testing procedure (MTP) that controls the FWER for any adaptively chosen path along the tree \textit{without knowing the exact nature of the counterfactual hypothesis tests}.


A simple approach is to perform a uniform Bonferroni correction for the size of the entire tree.
However, the standard Bonferroni procedure has low power because it ignores correlations between models and allocates substantial test mass to hypotheses that are unlikely to be considered.
Next we describe procedures that can achieve much higher power.

\vspace{-0.3cm}
\subsection{Sequentially rejective graphical procedures (SRGPs)}
\label{sec:SRGPs}
\vspace{-0.1cm}

We can design more powerful test data reuse procedures by building on \textit{sequentially rejective graphical procedures} (SRGPs), which use directed graphs to define a wide variety of iterative MTPs such as gatekeeping procedures, fixed sequence tests, and fallback procedures \citep{Bretz2009-bt}.
SRGPs traditionally assume the set of hypotheses $\{H_j: j \in I\}$ is prespecified and known.
The graph initially contains one node for each elementary hypothesis $H_j$, where each node is associated with a non-negative weight $w_{j}(I)$.
The initial node weights, which are constrained to sum to one, control how the total alpha is divided across the elementary hypotheses and correspond to a set of \textit{adjusted} significance thresholds $c_{j}(I)$.
We reject elementary hypothesis $H_j$ in the current graph if its marginal p-value $p_{j}$ is smaller than $c_{j}(I)$.
For instance, a standard Bonferroni correction is represented by the initial weights of $w_{j}(I) = 1/|I|$ for all $j \in I$ and significance thresholds $c_{j}(I) = w_{j}(I) \alpha$.
In addition, the graph contains directed edges where the edge $H_j$ to $H_k$ is associated with weight $g_{j, k}(I)$ for $j, k \in I$ and edge weights starting from the same node must sum to one.
When an elementary hypothesis $H_j$ is rejected, its node is removed from the graph and its weight is propagated to its children nodes.
This redistribution of test mass, also known as alpha-recycling, increases the power for testing the remaining hypotheses and strictly improves upon simpler procedures that do not use recycling.
More specifically, the weight of the edge from $H_j$ to $H_k$, denoted $g_{j, k}(I)$, represents how much of $H_j$'s node weight will be redistributed to $H_k$ if $H_j$ is rejected.
So when $H_j$ is removed, the new weight for hypothesis $H_k$ for $k \in I' = I \setminus \{j\}$ is
\begin{align}
	w_{k}(I') = w_{k}(I) + g_{j, k}(I) w_{j}(I).
	\label{eq:weight_prop}
\end{align}
Outgoing edges for all remaining nodes are also renormalized to sum back to one.
The SRGP continues until no more hypotheses can be rejected.
See Figure~\ref{fig:srgp_example} for an example.

\begin{figure}
	\centering
	\vspace{-0.3cm}
	\includegraphics[width=0.8\linewidth]{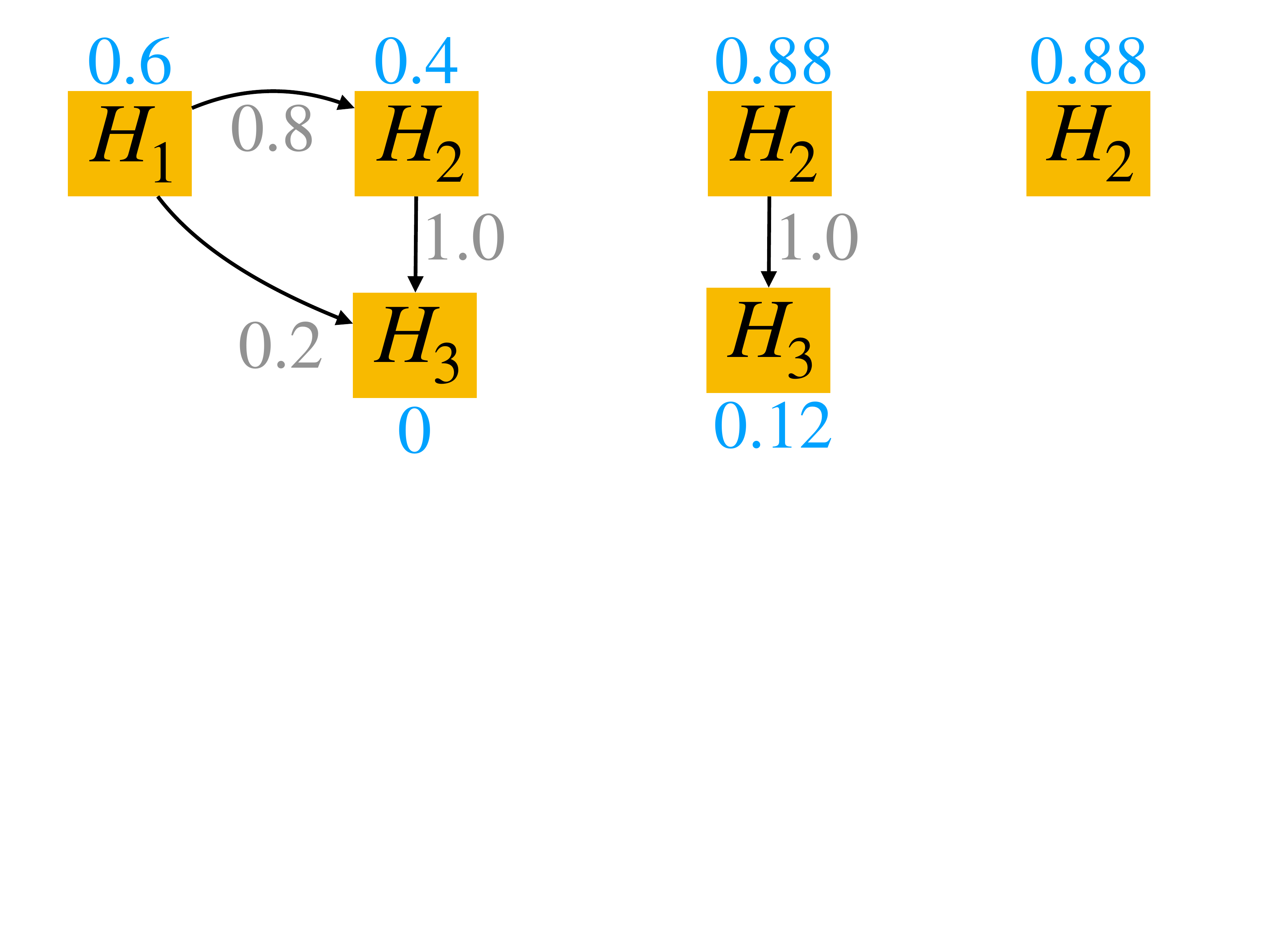}
	\vspace{-2.8cm}
	\caption{Example sequentially-rejective graphical procedure (SRGP)  for hypotheses $\{H_1, H_2, H_3\}$ with initial graph on the left, the middle graph after $H_1$ is rejected, and the right graph after $H_3$ is rejected. Node and edge weights are blue and gray, respectively.}
	\vspace{-0.5cm}
	\label{fig:srgp_example}
\end{figure}

Assuming all test reports are binary, we can describe adaptive test data reuse as the following \textit{prespecified} SRGP.
Let $I_t$ be the set of hypotheses remaining at time $t$, where $I_0$ is the initial set.
We only consider SRGPs with nonzero edge weights from hypothesis $H_{a_t}$ to the sequence of hypotheses that would be tested upon rejection of $H_{a_t}$ but prior to the next rejection (though one may also consider more complex recycling procedures).
For such SRGPs, the graph of hypotheses has the tree structure seen in Figure~\ref{fig:tree}, where the only edges in the tree are between hypotheses $H_{a_t}$ and $H_{a_{t'}}$ for $t < t'$ and
\vspace{-0.1cm}
\begin{align}
	a_{t', j} = a_{t, j} \mathbbm{1}\{j < t\} + \mathbbm{1}\{j = t\}.
	\label{eq:edges}
\end{align}
Note that the SRGP tree is \textit{not} the same as the bifurcating tree for generating hypotheses, as the former describes how alpha is recycled.

The model developer must prespecify all initial node weights $w_{a_t}(I_{0})$.
A simple approach is to perform a uniform Bonferroni correction across all nodes in the graph.
We can achieve more power by assigning larger weights to nodes that are more likely to be tested.
For example, if the model developer knows that all their modifications will be approved, they should set the initial node weight for $H_{()}$ and all edge weights along the top path in Figure~\ref{fig:tree} to one.

\begin{figure*}
	\centering
	\vspace{-0.3cm}
	\begin{subfigure}{0.3\textwidth}
		\centering
		\includegraphics[width=0.8\linewidth]{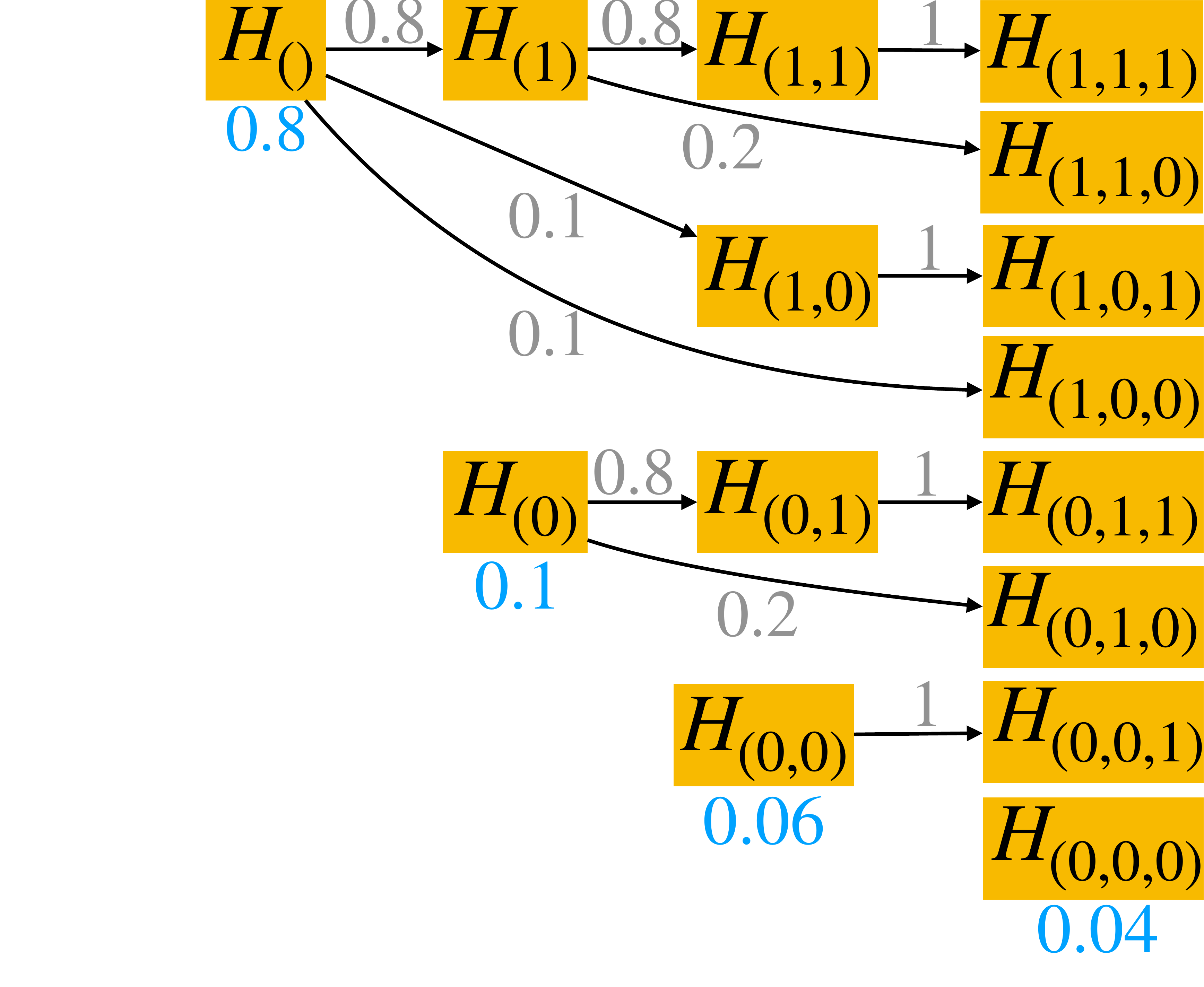}
		\caption{}
		\label{fig:tree}
	\end{subfigure}
	\begin{subfigure}{0.3\textwidth}
		\centering
		\vspace{0.1cm}
		\hspace{-0.5cm}
		\includegraphics[width=0.8\linewidth]{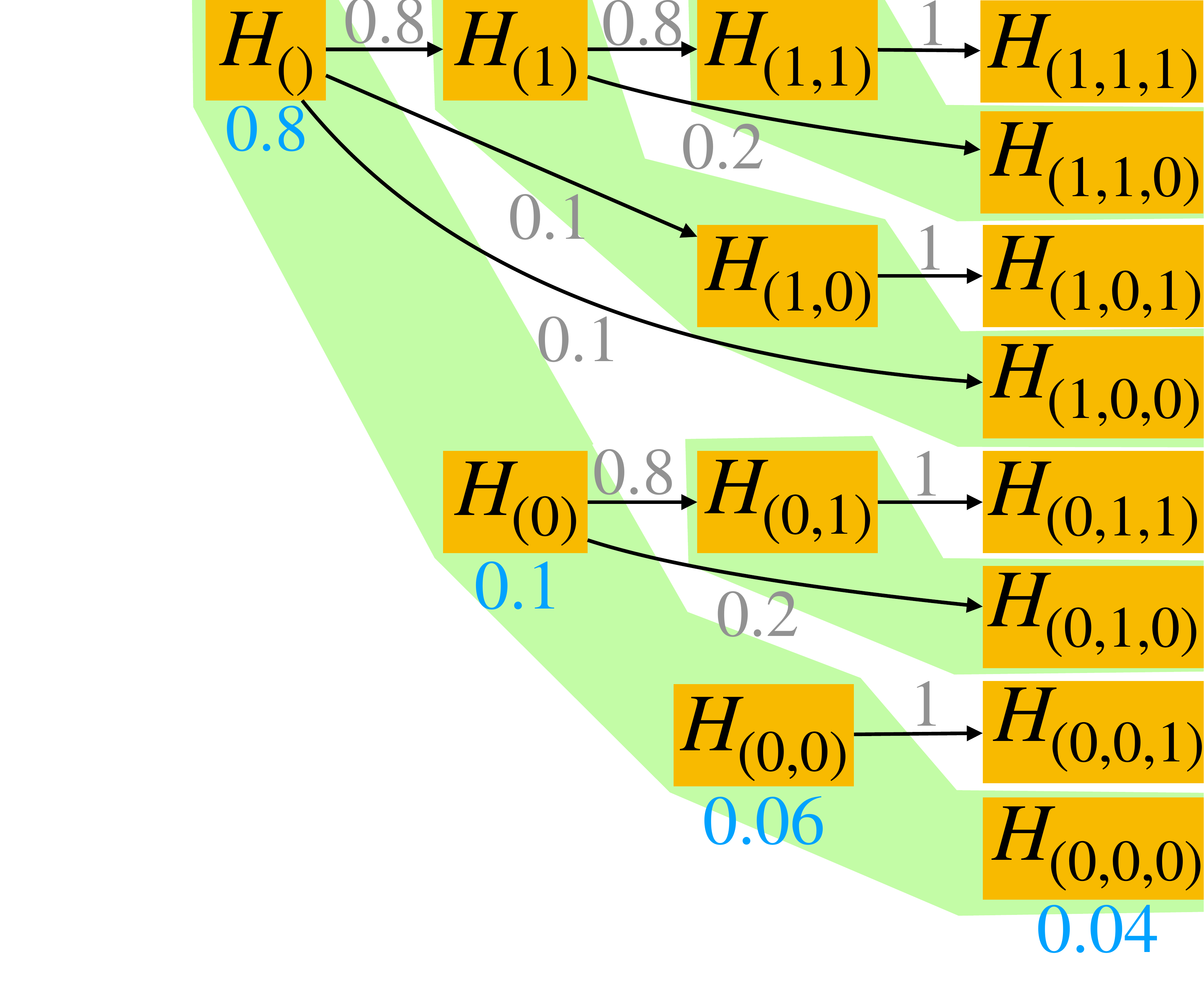}
		\vspace{-0.2cm}
		\caption{}
		\label{fig:tree_grouped}
	\end{subfigure}
	\begin{subfigure}{0.3\textwidth}
		\centering
		\hspace{-0.5cm}
		\includegraphics[width=0.8\linewidth]{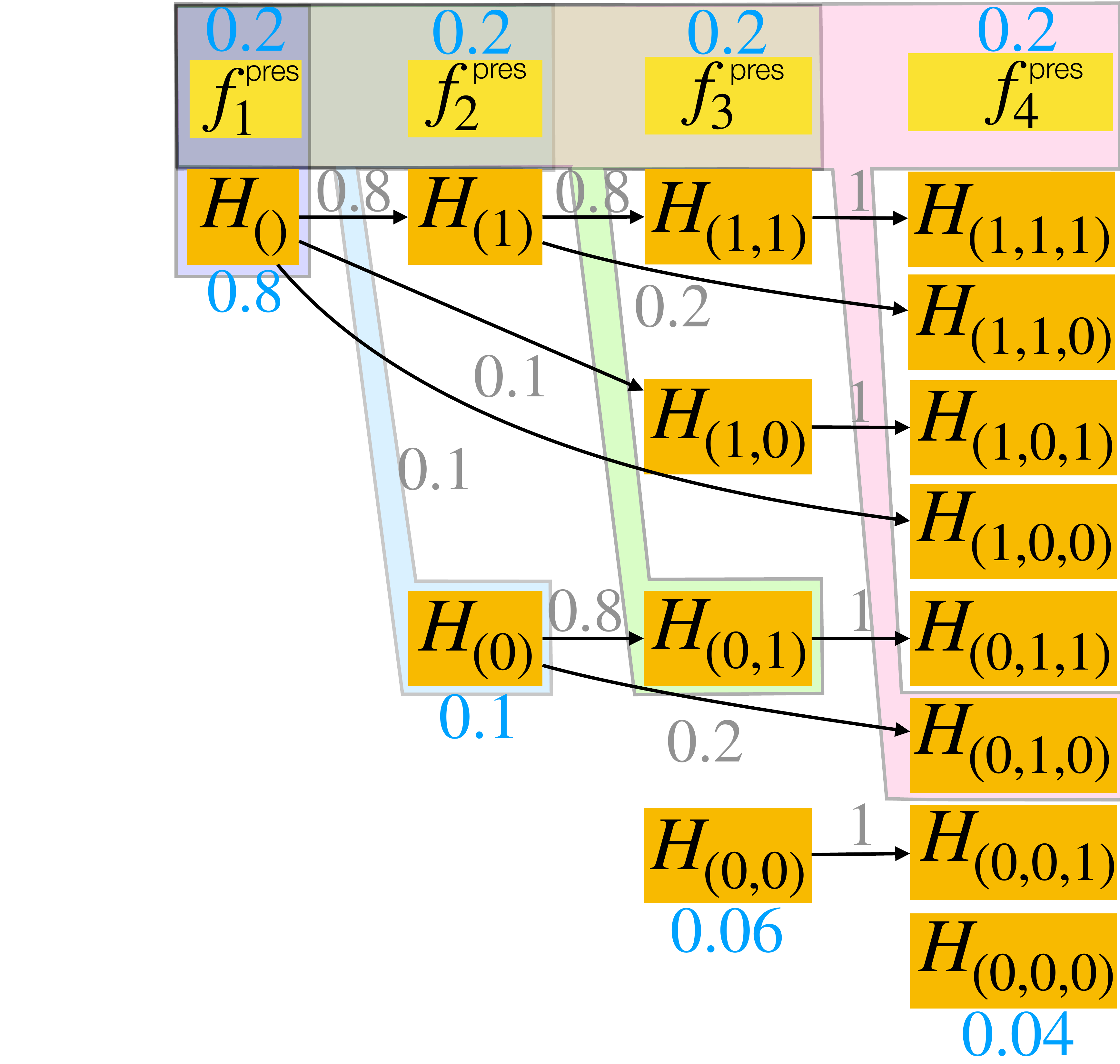}
		\vspace{-0.1cm}
		\caption{}
		\label{fig:tree_parallel}
	\end{subfigure}
	\vspace{-0.3cm}
	\caption{
		SRGPs for testing $T = 4$ adaptively-defined algorithmic modifications, where $H_{a_t}$ is the hypothesis for testing the adaptive modification given history $a_t \in \{0,1\}^{t - 1}$.
		(a) A SRGP based on a weighted Bonferroni test.
		(b) A SRGP that performs a fixed sequence test within each shaded subgroup.
		(c) A SRGP that adjusts for the correlation between the adaptively proposed modifications and those from a prespecified hypothetical updating procedure, denoted $\{f^{\prespec}_t: t = 1,...,T\}$.
		As an example, we indicate the correlation adjustments made along the path $H_{()}, H_{(0)}, H_{(0,1)}, H_{(0,1,0)}$.
	}
\end{figure*}

\begin{algorithm}[]
	\begin{algorithmic}
		\Require{Initialize $I_0$ as the set of all nodes in the prespecified tree; initialize $a_1 = ()$ and $\tau_1 = 0$; choose node weights $w_{a_{t'}}(I_0)$ for all $t'=1,2,\ldots,T$ and $a_{t'}\in\{0,1\}^{t'-1}$; and set $w_{a_{0}}(I_0) = 0$.}
		\Ensure{$\sum_{t', a_{t'}} w_{a_{t'}}(I_0) = 1$.}
		\For{$t=1,2,...,T$}
		\State{Specify edge weight $g_{a_{\tau_t}, a_{t}}$ that satisfies outgoing edge weight constraints.}
		\State{\texttt{\# Weight propagation}}
		\State{$w_{a_{t}}(I_t) = w_{a_{t}}(I_{t - 1}) + g_{a_{\tau_t}, a_{t}} w_{a_{\tau_t}}(I_{\tau_t})$}
		\For{all $t', a_{t'}$ such that $H_{a_{t'}}  \in I_t$ and $a_{t'} \neq a_t$}
		\State{\texttt{\# Other weights remain unchanged}}
		\State{$w_{a_{t'}}(I_t) = w_{a_{t'}}(I_{t - 1})$}
		\EndFor

		\State{Let $p_{a_t}$ be the marginal p-value from testing $H_{a_t}$.}
		\State{Compute significance threshold $c_{a_t}(I_t)$ using \texttt{compute\_sig\_threshold}$(a_t, \{w_{a_{t'}}(I_t): t', a_{t'}\})$}

		\If{$p_{a_t} \le c_{a_t}(I_{t})$}
		\State{Report that  $\hat{f}_{t}^{\adapt}$ has been approved.}
		\State{\texttt{\# Remove node}}
		\State{$I_{t + 1} = I_{t} \setminus {a_{t}}$}
		\State{$\tau_{t + 1} = t$}
		\State{$a_{t + 1} = (a_{t}, 1)$}
		\Else
		\State{Report that  $\hat{f}_{t}^{\adapt}$ has not been approved.}
		\State{$\tau_{t + 1} = \tau_t$}
		\State{$a_{t + 1} = (a_{t}, 0)$}
		\EndIf
		\EndFor
	\end{algorithmic}
	\caption{
		A sequentially rejective graphical procedure (SRGP) that only outputs binary test reports for $T$ adaptive hypotheses given function \texttt{compute\_sig\_threshold}.
	}
	\label{algo:graph_update}
\end{algorithm}

The model developer will only need to incrementally reveal the edge weights.
Let $\tau_{t}$ denote the time of the latest approval prior to time $t$.
At time $t$, the developer must specify the edge weights $g_{a_{\tau_{t}}, a_t}$ such that the outgoing weights from $a_{\tau_t}$ sum to no more than one.
(Note that the edge weight can be treated as a constant because the only relevant edge weight at time $t$ is $g_{a_{\tau_t}, a_t}(I_t)$ and its value is equal to $g_{a_{\tau_t}, a_t}(I_{t'})$ for all $t' < t$.)
As such, this procedure for specifying node and edge weights corresponds to a fully prespecified SRGP where a subset of the edge weights are revealed sequentially.
To make sure that this SRGP can be executed in the adaptive setting, we must be able to calculate the adjusted significance thresholds for the adaptive hypotheses given the current set of node weights without observing the counterfactual hypotheses.

The entire SRGP algorithm for testing adaptive algorithmic modifications is outlined in Algorithm~\ref{algo:graph_update}.
It accepts some function \texttt{compute\_sig\_threshold} that outputs the significance threshold for the adaptively chosen hypothesis given node weights in the current tree.
To prove that an SRGP with function \texttt{compute\_sig\_threshold} controls the FWER, we must show that it is a closed test procedure that satisfies the consonance property.
Recall that a closed test procedure uses the following recipe to control the FWER at level $\alpha$: it rejects an elementary hypothesis $H_{j}$ if the intersection hypothesis $H_K = \cap_{k \in K} H_{k}$ for every subset $K \subseteq I$ containing the elementary hypothesis $H_j$ is rejected at level $\alpha$ \citep{Lehmann2005-us}.
Moreover, a closed test satisfies the consonance property if the following is true for all $J \subseteq I$: if intersection hypothesis $H_{J}$ is rejected locally (i.e. its p-value is no more than $\alpha$), there exists some $j \in J$ such that $H_K$ can be rejected locally for all $K \subseteq J$ with $j \in K$ \citep{Gabriel1969-mk}.
In particular, it follows that the corresponding elementary hypothesis $H_j$ can be rejected by the closed test procedure.
When consonance holds, we can perform the closed test using a sequentially rejective (or ``shortcut'') procedure that iteratively rejects the elementary hypotheses \textit{without needing to test every intersection hypothesis} \citep{Hommel2007-mw}.
When hypothesis tests are fully prespecified, consonance makes closed testing more computationally efficient/tractable.
The consonance property is even more important in the adaptive setting because we can reject the adaptive hypotheses without observing counterfactual or future hypotheses.
As such, the consonance property of an SRGP in the adaptive setting is not simply for computational efficiency, but is necessary for being able to compute anything.

Below, we will describe three SRGPs for testing an adaptive sequence of algorithmic modifications, presented in order of increasing complexity.
Each differ in how \texttt{compute\_sig\_threshold} is defined.
To prove that the procedures satisfy the consonance property, it is sufficient to show that the following monotonicity condition holds \citep{Bretz2009-bt}:
For every pair of subsets $K, J \subseteq I$ where $K\subseteq J$ and $j\in K$, we have
\begin{align}
	c_{j}(J) \le c_{j}(K).
	\label{eq:monotonicity}
\end{align}
All proofs are provided in the Appendix.

\vspace{-0.3cm}
\subsection{Bonferroni-based SRGPs}
\label{sec:bonf_SRGP}
\vspace{-0.1cm}

We begin with the simplest SRGP that performs closed testing with a weighted Bonferroni-Holm correction based on node weights, which was originally proposed in \citet{Bretz2009-bt} to test a set of fully pre-specified hypotheses.
Nevertheless, this procedure can also be applied in the adaptive setting because the significance thresholds do not depend on observing the counterfactual hypotheses.
In particular, this procedure tests the $t$-th adaptive hypothesis given history $a_t$ by comparing its marginal p-value to the corrected significance threshold
$ c_{a_{t}}(I_{t}) = w_{a_{t}}(I_{t}) \alpha$.
Because this closed test satisfies the monotonicity condition, Algorithm~\ref{algo:graph_update} with this significance threshold controls the FWER for the adaptive hypotheses at level $\alpha$.

As a simple example, consider an SRGP that initially assigns Bonferroni-corrected weights to every node and selects nonzero edge weights.
This is more powerful than performing a standard Bonferroni correction without any alpha-recycling because the significance thresholds are monotonically non-decreasing at each iteration.

\vspace{-0.4cm}
\subsection{SRGPs with fixed sequence tests for correlated modifications}
\label{sec:SRGP_fixed}

In practice, algorithmic modifications are likely to be highly correlated.
In this case, a Bonferroni-based SRGP is conservative.
We can design more powerful SRGPs by taking into account correlation between the p-values.
\citet{Bretz2011-hd} proposed a procedure that calculates an inflation factor $c(I)$ for intersection hypothesis $I$ such that the probability there exists an elementary hypothesis $H_{\N}$ with marginal p-value $p_{\N}$ less than $c(I) w_{\N}(I) \alpha$, under the null $I$, is no more than $\alpha$.
\citet{Millen2011-pw} proposed a similar procedure but for test statistics and critical values.
Unfortunately, both procedures require knowing the exact correlation structure between all the hypotheses and checking that the monotonicity property holds.
This is not feasible in the adaptive setting.
To resolve these issues, we propose a new SRGP that (1) partitions the hypothesis tree into sequences of \textit{observed} hypotheses and (2) uses a fixed-sequence test within each subgroup.

We group together hypotheses that would be tested along a streak of failures immediately following a successful approval (Figure~\ref{fig:tree_grouped}).
That is, we define a subgroup for history $a_t \in \{0,1\}^{t-1}$ with  $a_{t,t-1} = 1$ as the hypotheses with histories $a_{t'} = (a_t, \vec{0})$ for any length zero vector, i.e.
\begin{align*}
\hspace{-1cm}
	G_{a_t} = \left \{
	H_{{a}_{t'}}: {a}_{t',i} = a_{t,i} \mathbbm{1}\{i\le t-1\},
	\forall i = 1,...,t' - 1,
	\forall t' \ge t  \right \}.
\end{align*}
To test intersection hypothesis $I$, we test each subgroup $G_{a_t} \cap I$ at level
$
\left(\sum_{H_{a_{t'}} \in G_{a_t} \cap I} w_{a_{t'}}(I) \right) \alpha.
$
We reject $H_I$ at level $\alpha$ if any of the subgroup-specific tests are rejected.
We can show that this controls the Type I error at level $\alpha$ using a union bound.
To test a subgroup, we test its hypotheses in the order they are revealed and spend up to the allocated alpha weight.
To satisfy the monotonicity property, the significance threshold $c_{a_{j}}(I)$ for $a_j \in G_{a_t} \cap I$ is defined as the maximum threshold that spends no more than the allocated alpha up to time $j$ for all subsets of hypotheses, i.e.
\begin{align}
\hspace{-0.8cm}
	\begin{split}
		& c_{a_{j}}(I) =\sup  \tilde{c} \\
		& \text{s.t. } \Pr\left(p_{a_k} > c_{a_k}(I) \forall a_k \in K, p_{a_j} < \tilde{c} | H_{K \cup \{a_j\}} \right)\\
		& \le  \Bigg [
		\sum_{\substack{a_k \in ((G_{a_t} \cap I) \setminus K) \\ k \le j}} \hspace{-0.3cm} w_{a_{k}}(I) \Bigg ]
		\alpha \quad \forall K \subseteq \{a_k : a_k \in G_{a_t} \cap I, k < j\}.
	\end{split}
	\label{eq:sig_thres_corr}
\end{align}
This expression is complicated because it handles arbitrary correlation structures between the p-values.
It greatly simplifies in certain cases.
For example, if we are performing one-sided Z-tests and the pairwise correlations of the model losses  are non-negative, \eqref{eq:sig_thres_corr} is equivalent to defining $c _{a_{j}}(I)$ as the solution to
\begin{align*}
	\begin{split}
		& \Pr\left(p_{a_{k}} > c_{a_{k}}(I) \forall k=t,...,j-1, p_{a_{j}} < {c}_{a_{j}}(I) | \cap_{k=t}^j H_{a_{k}} \right)\\
		&= \ w_{a_{j}}(I) \alpha.
	\end{split}
\end{align*}
Using the fixed sequence tests from above, we sequentially calculate the significance thresholds and test the adaptive hypotheses.
When a hypothesis is rejected, we remove its node and propagate its \textit{local} weight to its children nodes per \eqref{eq:weight_prop}.
We can prove the monotonicity condition holds to establish the following result:
\begin{theorem}[]
	Algorithm~\ref{algo:graph_update} with significance thresholds chosen using \eqref{eq:sig_thres_corr} controls the FWER for adaptively defined hypotheses at level $\alpha$.
	\label{thrm:ffs}
\end{theorem}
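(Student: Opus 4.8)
The plan is to prove Theorem~\ref{thrm:ffs} via the closure principle, following the route sketched in the text: (i) show that for every intersection hypothesis $H_I$, $I \subseteq I_0$, the local test built around \eqref{eq:sig_thres_corr} (partition into subgroups $G_{a_t}\cap I$, union bound across subgroups, fixed-sequence test within each subgroup) has level at most $\alpha$; and (ii) verify the monotonicity condition \eqref{eq:monotonicity} for the thresholds $c_{a_j}(\cdot)$, which by \citet{Bretz2009-bt} gives consonance and hence that the sequentially rejective shortcut in Algorithm~\ref{algo:graph_update} realizes the closed test --- and, crucially for the adaptive setting, that it only ever needs the thresholds $c_{a_t}(I_t)$ along the realized path, never the counterfactual nodes. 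Given (i) and (ii), FWER control follows from the standard argument: if $p_{a_t}\le c_{a_t}(I_t)$ then monotonicity yields $p_{a_t}\le c_{a_t}(K)$ for every $K\subseteq I_t$ with $a_t\in K$, so the subgroup of $H_K$ containing $a_t$ rejects and $H_K$ is rejected locally; intersection sets $K$ that also contain a previously rejected node are handled inductively; hence every $H_K$ with $a_t\in K\subseteq I_0$ is rejected and the closed test rejects the elementary $H_{a_t}$.

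For step (i) I would first note that the subgroups $\{G_{a_t}:a_{t,t-1}=1\}$ partition the tree, so $\{G_{a_t}\cap I\}$ partitions $I$ and the subgroup allocation levels sum to $\big(\sum_{a\in I}w_{a}(I)\big)\alpha\le\alpha$; the inequality is the elementary SRGP fact that propagation of the form \eqref{eq:weight_prop} with outgoing edge weights summing to at most one never increases the total node weight, which starts at $1$. A union bound over subgroups then reduces the claim to: the fixed-sequence test inside one subgroup $G_{a_t}\cap I=\{b_1,b_2,\dots\}$ (in revealed order) rejects with probability at most $\big(\sum_m w_{b_m}(I)\big)\alpha$ under $H_I$. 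Decomposing by the index $m$ of the first rejection and invoking the constraint in \eqref{eq:sig_thres_corr} with $K=\{b_1,\dots,b_{m-1}\}$ --- whose right-hand side collapses to exactly $w_{b_m}(I)\alpha$, since $b_m$ is the only subgroup node of index $\le m$ outside $K$ --- bounds the $m$-th term by $w_{b_m}(I)\alpha$, and summing over $m$ gives the subgroup bound. ($H_I$ implies $H_{K\cup\{b_m\}}$, and the gap between the strict ``$>$'' in \eqref{eq:sig_thres_corr} and the ``$\ge$'' in the decomposition is immaterial for continuously distributed p-values.)

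For step (ii) I would induct on the position $j$ of $a_j$ within its subgroup. The base case is the smallest-index node of $G_{a_t}\cap I$, where only the constraint with empty index set is active, forcing $c(\,\cdot\,)=w(\,\cdot\,)\alpha$; monotonicity then reduces to the standard monotonicity of SRGP node weights, $w_{a}(K)\ge w_{a}(J)$ for $a\in K\subseteq J$ \citep{Bretz2009-bt}. In the inductive step, given a constraint index set $K'$ appearing in the definition of $c_{a_j}(K)$, I would match it against the $J$-constraint with index set $K'\cup D$, where $D$ collects the subgroup nodes lying in $J\setminus K$ with index below $j$. This choice makes the right-hand-side summation of the $J$-constraint range over exactly the same node set as that of the $K$-constraint, so termwise weight monotonicity gives $\mathrm{RHS}_J\le\mathrm{RHS}_K$; combined with the inductive hypothesis $c_{a_k}(K)\ge c_{a_k}(J)$, which shrinks the upper-tail events $\{p_{a_k}>c_{a_k}(\,\cdot\,)\}$ for $a_k\in K'$, the claim $c_{a_j}(K)\ge c_{a_j}(J)$ would follow once we know that enlarging both the conditioning event (by the $D$-events $\{p_{a_k}>c_{a_k}(J)\}$) and the conditioning null (from $H_{K'\cup\{a_j\}}$ to $H_{K'\cup D\cup\{a_j\}}$) does not decrease the relevant worst-case probability. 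This quantification over \emph{all} index subsets $K$ in \eqref{eq:sig_thres_corr} is precisely what makes such matching possible.

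I expect this last point to be the main obstacle: it is the only place where the joint law of the test statistics genuinely enters, the combinatorial bookkeeping above being distribution-agnostic. The argument should proceed by evaluating probabilities at the least-favorable null configuration (all relevant means on the hypothesis boundary) --- noting that for a $D$-free event one may freely push $D$'s means to the boundary, so the worst case over $H_{K'\cup\{a_j\}}$ coincides with that over $H_{K'\cup D\cup\{a_j\}}$ --- and then controlling the effect of intersecting with the upper-tail $D$-events, which in the clean case of one-sided $Z$-tests with non-negative loss correlations follows from positive association of jointly Gaussian vectors with non-negative correlations (and reduces \eqref{eq:sig_thres_corr} to the single displayed equation in the text). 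Making this rigorous for arbitrary correlation structures, and checking that the ``first node'' of a subgroup and its internal indexing shift consistently when passing from $J$ to $K$, is the delicate part; everything else is the standard closed-testing and consonance machinery of \citet{Bretz2009-bt, Hommel2007-mw}.
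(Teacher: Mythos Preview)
Your proposal follows essentially the same route as the paper: first establish that the subgroup-partitioned fixed-sequence test is a valid level-$\alpha$ local test for every intersection hypothesis $H_I$ (union bound across subgroups, then decompose by the index of the first rejection within a subgroup and invoke \eqref{eq:sig_thres_corr} with $K=\{b_1,\dots,b_{m-1}\}$ so that the right-hand side collapses to $w_{b_m}(I)\alpha$), and then prove the monotonicity condition \eqref{eq:monotonicity} by induction on the time index to obtain consonance and hence that Algorithm~\ref{algo:graph_update} realizes the closed test. The only differences are minor: the paper first reduces any (possibly stochastic) adaptive strategy to a fully prespecified SRGP via Lemmas~\ref{lemma:equiv_prespec}--\ref{lemma:stochastic} before running the closed-test argument, and in the inductive step it matches each constraint index $K$ to the \emph{same} $K$ in the larger intersection set (so the left-hand probabilities coincide and the burden falls on comparing the right-hand weight sums), whereas you match $K'$ to $K'\cup D$ (so the right-hand sums range over identical node sets and the burden shifts to the left-hand probability comparison)---your explicit flagging of that comparison as the delicate distributional step corresponds precisely to the place where the paper's argument is most compressed.
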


\vspace{-0.4cm}
\subsection{SRGPs with prespecified hypothetical model updates}
\label{sec:SRGP_natural}

The SRGPs in the above sections protect against the worst case scenario where the model developer is adversarial.
In practice, the model developer may have a plan for how they will update their model over time (i.e. continually refit the model on accumulating data) and will only make small adjustments based on test results.
As such, we do not expect the adaptively chosen model at iteration $t$ to stray far from the initial plan.
In the most extreme case, we may find that the model developer is not adaptive at all and follows the prespecified procedure perfectly; instead of correcting for $(2^{T} - 1)$ hypotheses, we would expect that the correction factor to be $O(T)$ instead.

To leverage this similarity assumption, we propose a novel SRGP that requires the model developer to prespecify a procedure for generating hypothetical model updates.
This prespecified procedure describes the \textit{exact} steps for how modifications would be generated, e.g. the data stream used, the number of training observations, and hyperparameter selection.
These hypothetical model updates are included as additional nodes in the hypothesis graph and assigned positive node weights.
Their sole purpose is to improve power for approving the adaptively-defined model updates.
These model updates are never formally tested nor approved for deployment.
We also do not release \textit{any} information about their test performance, because doing so would increase the amount of information leaked to the model developer and the branching factor of the adaptive tree.

At each iteration, this SRGP constructs a confidence region for the performance of the $t$-th prespecified model update $\hat{f}_t^{\prespec}$ by spending its allocated alpha, accounting for its correlation with all prespecified models up to iteration $t - 1$.
It then tests the $t$-th adaptive model by accounting for its correlation with the prespecified models up to iteration $t$.
As such, the power for testing the adaptive modifications increase as their correlation with the prespecified updates increases.


More formally, the critical value and significance threshold at time $t$ are calculated as follows.
Let $P_0$ denote the target population and $P_n$ denote the empirical distribution of the test dataset.
Here we consider a univariate performance measure $\psi$, where $\psi\left(\hat{f}, P \right)$ is the performance of model $\hat{f}$ with respect to distribution $P$.
It is straightforward to extend this procedure to multivariate performance measures (see the Appendix for an example).
Denote the deviation between the estimated and true performance as
$$
\xi_{t,n}^{\prespec} = \psi\left(\hat{f}_t^{\prespec}, P_n \right) - \psi\left(\hat{f}_t^{\prespec}, P_0 \right).
$$
For intersection hypothesis $I$, define $\tilde{I}$ as union of $I$ and all prespecified nodes.
Define critical value $z_t^{\prespec}(I)$ as the largest $\tilde{z}$ such that
\begin{align}
	\Pr\left(
	\xi_{t',n}^{\prespec} > z_{t'}^{\prespec}(I) \  \forall t' < t,
	\quad \xi_{t,n}^{\prespec} \le \tilde{z}
	\right) \le w_{t}^{\prespec}\left (\tilde{I} \right) \alpha.
	\label{eq:pres_ci}
\end{align}
The significance threshold $c_{a_t}(I)$ for testing $H_{a_t}$ is defined as the largest $\tilde{c}$ such that
\begin{align}
\hspace{-0.2cm}
	\Pr\left(
	\xi_{t',n}^{\prespec} > z_{t'}^{\prespec}({I}) \ \forall t' \le t,
	\quad p_{a_t} \le \tilde{c} \mid H_{a_t}
	\right) \le w_{a_t}\left (\tilde{I} \right) \alpha.
	\label{eq:adapt_err}
\end{align}
Crucially, these calculations do not depend on observing counterfactual or future hypotheses.
Using a union bound, we can show that the Type I error for falsely rejecting the intersection hypothesis $I$ using the critical values defined above is bounded by the sum of the right hand sides of \eqref{eq:pres_ci} and \eqref{eq:adapt_err} for all $(t, a_t)$ in $I$.
Because the total weight in the graph is always one, we achieve Type I error control at level $\alpha$.
Using this idea, we can show that this SRGP indeed controls the FWER:
\begin{theorem}
	Algorithm~\ref{algo:graph_update} using significance thresholds defined using equations~\eqref{eq:pres_ci} and \eqref{eq:adapt_err} control FWER at level $\alpha$ for adaptively selected hypotheses.
	\label{thrm:parallel}
\end{theorem}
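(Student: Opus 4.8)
The plan is to exhibit Algorithm~\ref{algo:graph_update} with the thresholds \eqref{eq:pres_ci}--\eqref{eq:adapt_err} as the sequential (``shortcut'') implementation of a \emph{consonant closed test} over the fully prespecified SRGP of Section~\ref{sec:SRGPs}, so that it rejects exactly the hypotheses rejected by closed testing and hence controls the FWER in the strong sense by the closed testing principle~\citep{Lehmann2005-us}. The point worth stressing is that this holds even though the prespecified deviations $\xi_{t,n}^{\prespec}$, the counterfactual hypotheses, and the future hypotheses are never observed: the \emph{decision rule} depends only on observed p-values and on deterministic thresholds. Two facts suffice: (A) for each subset $I$ of adaptive hypotheses, the local test of the intersection hypothesis $H_I$ has level $\alpha$; and (B) the induced significance thresholds satisfy the monotonicity condition \eqref{eq:monotonicity}. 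Given (A) and (B), consonance follows from \citet{Bretz2009-bt} and the theorem is immediate.

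For (A), fix $I$ and put $\tilde I = I\cup\{\text{prespecified nodes}\}$. The local test rejects $H_I$ exactly when some observed p-value obeys $p_{a_t}\le c_{a_t}(I)$. To bound its size, introduce the decreasing ``survival'' events $S_t(I)=\{\xi_{t',n}^{\prespec}>z_{t'}^{\prespec}(I)\ \text{for all } t'\le t\}$ and split on $S_T(I)$. On $S_T(I)^c=\bigcup_{t'}\big(S_{t'-1}(I)\cap\{\xi_{t',n}^{\prespec}\le z_{t'}^{\prespec}(I)\}\big)$ the events are pairwise disjoint (distinguished by first escape time), and each has probability at most $w_{t'}^{\prespec}(\tilde I)\alpha$ by \eqref{eq:pres_ci}. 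On $S_T(I)$, an adaptive rejection at level $t$ lies inside $S_t(I)\cap\{p_{a_t}\le c_{a_t}(I)\}$, which under $H_{a_t}$ is precisely the event of \eqref{eq:adapt_err} and so has probability at most $w_{a_t}(\tilde I)\alpha$; a union bound over $a_t\in I$ gives $\sum_{a_t\in I}w_{a_t}(\tilde I)\alpha$. Adding the two parts and using that total weight in the graph is at most one, so $\sum_{j\in\tilde I}w_j(\tilde I)\le1$, yields level $\alpha$. Here I use that $\xi_{t,n}^{\prespec}$ is purely distributional, so the only hypothesis bearing on each term is $H_{a_t}$, which holds under $H_I$.

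For (B), I would show by simultaneous induction on $t$ that $K\subseteq J$ implies $z_t^{\prespec}(J)\le z_t^{\prespec}(K)$ and $c_{a_t}(J)\le c_{a_t}(K)$. Two monotonicities drive the induction step: passing from $J$ to $K$ only \emph{adds} node weight, since nodes in $J\setminus K$ recycle their mass inward via \eqref{eq:weight_prop}, so the right-hand sides of \eqref{eq:pres_ci} and \eqref{eq:adapt_err} grow; and the inductive hypothesis $z_{t'}^{\prespec}(K)\ge z_{t'}^{\prespec}(J)$ for $t'<t$ shrinks the conditioning survival event, slackening the probability constraint. Both effects let the defining suprema increase, closing the induction for $z^{\prespec}$ and, with survival up to $t'\le t$, for $c$. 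Consonance then follows, so the shortcut in Algorithm~\ref{algo:graph_update} coincides with the closed test: if $p_{a_t}\le c_{a_t}(I_t)$ then monotonicity gives $c_{a_t}(K)\ge c_{a_t}(I_t)\ge p_{a_t}$ for every $K\subseteq I_t$ with $a_t\in K$, so each such $H_K$ is locally rejected. Strong FWER control follows since any rejection of a true $H_{a_t}$ forces rejection of the local test of the intersection of all true tested hypotheses, an event of probability at most $\alpha$ by (A).

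The main obstacle is the joint monotonicity argument in (B): it must carry $z^{\prespec}$ and $c$ together and verify that \emph{every} constraint in the nested definitions \eqref{eq:pres_ci}--\eqref{eq:adapt_err} slackens simultaneously when the index set shrinks, which requires tracking how the SRGP's weight recycling interacts with the survival events $S_t$. A secondary technical point, needed for (A), is that $\Pr\big(S_t(I)\cap\{p_{a_t}\le c_{a_t}(I)\}\mid P\big)\le w_{a_t}(\tilde I)\alpha$ for \emph{every} $P$ in the composite null $H_{a_t}$, not merely on its boundary; this uses validity of the marginal p-value together with the distributional structure assumed for the tests (e.g.\ one-sided Z-tests with non-negative loss correlations, as in Section~\ref{sec:SRGP_fixed}), under which the least favorable configuration lies on the null boundary.
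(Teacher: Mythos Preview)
Your proposal is correct and follows essentially the same two-step structure as the paper: (A) bound the local level of each intersection test by inserting the prespecified ``escape'' events and applying a union bound so the total equals the graph weight $\alpha$, and (B) invoke monotonicity of the thresholds to obtain consonance and hence the shortcut. Your treatment of (B) is more careful than the paper's---you run an explicit induction on $t$ carrying $z_t^{\prespec}$ and $c_{a_t}$ together, whereas the paper simply asserts that nondecreasing weights in Algorithm~\ref{algo:graph_update} imply monotonicity of \eqref{eq:adapt_err}---but this is elaboration rather than a different route.
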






\vspace{-0.3cm}
\section{Simulation studies}
\label{sec:sim}
\vspace{-0.1cm}

We now present two simulation studies of model developers who adaptively propose modifications to their initial ML algorithm.
The developers aim to improve the model's area under the receiver operating characteristic curve (AUC) and quantify the performance increase as accurately as possible.
Because our adaptive test data reuse procedures only release a single bit of information at each iteration, we must carefully design the hypothesis tests to obtain a numeric bound on the performance improvement.
In particular, we define the $j$-th adaptive hypothesis test as
\begin{align}
	H_{0,j}^{adapt}: \psi\left (\hat{f}^{adapt}_j; P_0 \right) \le \psi\left (\hat{f}_0; P_0 \right) + \delta^{adapt}_j
	\label{eq:null_delta}
\end{align}
where $\psi(f, P)$ denotes AUC of model $f$ for distribution $P$ and $\delta^{adapt}_j \ge 0$ is the improvement difference that we are trying to detect.
To ensure the model performance tends to improve with each approval, we set $\delta^{adapt}_{j + 1} = \delta^{adapt}_{j} + 0.01$ whenever the $j $-th null hypothesis is rejected.
Note that one could consider more complicated hypotheses, each with their pros and cons.
For example, one can check that the modifications are strictly improving \textit{and} test for an improvement difference; however, this can be overly stringent.

The purpose of the first simulation study is to investigate FWER control.
We do this by simulating a model developer who tries to overfit to the test data based on the information released at each iteration.
The purpose of the second simulation study is to investigate power.
Here the model developer generally proposes good algorithmic modifications by continually refitting the model given an IID data stream.

In both simulations, we generate $X \in \mathbb{R}^{100}$ using a multivariate Gaussian distribution.
$Y$ is generated using a logistic regression model where the coefficients of the first six variables are 0.75 and all other model parameter are zero.
The modifications are also logistic regression models.
We evaluate the two SRGPs proposed in this paper---SRGP with fixed sequence tests (\texttt{fsSRGP}) and SRGP with hypothetical prespecified model updates (\texttt{presSRGP})---against relevant baseline comparators, including the standard Bonferroni procedure (\texttt{Bonferroni}) and the Bonferroni-based SRGP (\texttt{bonfSRGP}).
The weights in the SRGPs were defined such that the first outgoing edge (a successful rejection of the hypothesis) is 0.8 and for each subsequent edge, it was assigned 0.8 of the remaining weight.
Unless specified otherwise, all the MTPs control the FWER at level $\alpha=0.1$.
Details for deriving test statistics and significance thresholds are provided in the Appendix.

\vspace{-0.3cm}
\subsection{Verifying FWER control}

\begin{figure}
\hspace{-0.4cm}\includegraphics[width=0.53\textwidth]{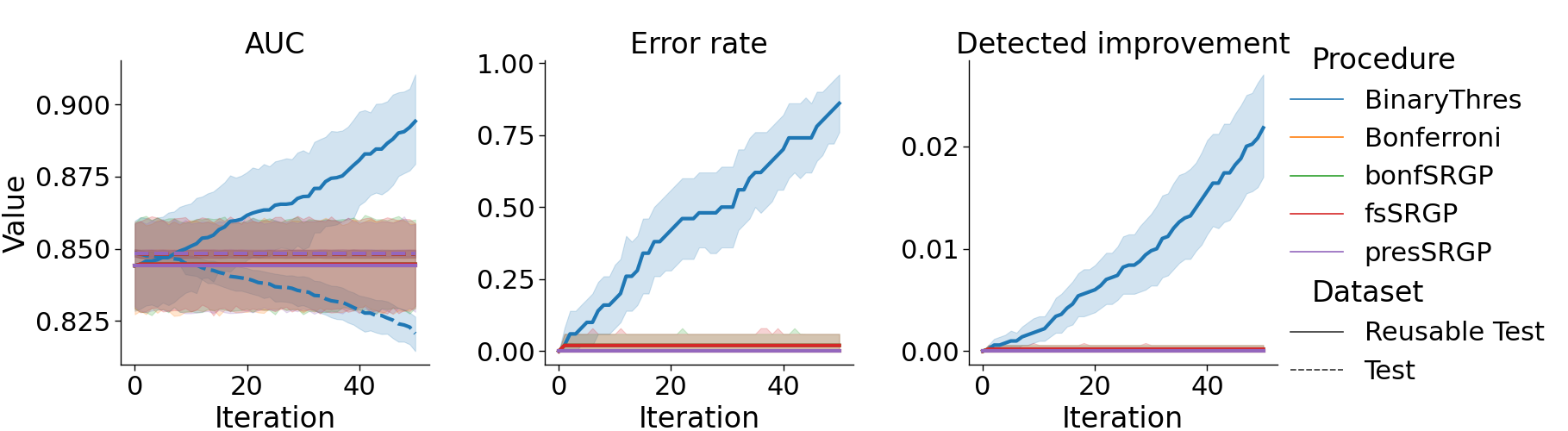}
\caption{
	Comparison of multiple testing procedures (MTPs) for approving algorithmic modifications, where the adaptive procedure tries to overfit to the reusable test dataset.
	Left: AUC of the most recently approved model on the reusable and a completely held out test dataset. Middle: the rate of incorrectly approving at least one unacceptable modification. Right: increase in the AUC detected by the MTPs.
}
\label{fig:fwer}
\vspace{-0.2cm}
\end{figure}

Here we show how MTPs that fail to control the FWER can drastically elevate one's risk of overfitting to the test data, as compared to appropriately-designed adaptive test data reuse procedures.
In particular, we consider the na\"ive procedure that tests every adaptive hypothesis at level $\alpha$ (\texttt{BinaryThres}).
The reusable test dataset has 100 observations and the model developer tests $T = 50$ modifications.
For the purpose of illustration, the initial model is set to the oracle, so all proposed modifications are unacceptable.

The simulated model developer tries to find models that overfit to the test data by searching within the neighborhood of the currently approved model.
In particular, the developer iteratively perturbs the coefficient of each irrelevant variable by 0.6 in the positive and negative directions.
When any such modification is approved, the model developer will continue perturbing that coefficient in the same direction until it fails to reject the null hypothesis.
For \texttt{presSRGP}, the prespecified model update at iteration $t$ is the model with coefficients exactly the same as the initial model except that the coefficient for the $(7 + \lfloor t/2\rfloor)$-th variable is set to 0.6 if $t$ is even and -0.6 if $t$ is odd.

Figure~\ref{fig:fwer} shows the result from 400 replicates.
Notably, \texttt{BinaryThres} approves at least one inferior modification with probability 75\% and concludes that the modifications by the last iteration improves the AUC by at least 0.02, \textit{even though the AUC actually drops by 0.025 on average}.
All the other MTPs appropriately control the FWER at the desired rate of 10\% and, thus, protect against over-fitting.

\vspace{-0.3cm}
\subsection{Assessing power}
\label{sec:sim_power}

Here the simulated model developer has access to an IID data stream and iteratively refits a logistic regression model on this data.
Because training on more data from the target population tends to improve model performance, the modifications are usually beneficial.
However, there is a risk that the modification does not improve performance or that the improvement is negligible, especially because there is a potential for overfitting to the reusable test data set.
By testing hypotheses \eqref{eq:null_delta}, we can restrict approval to only those model updates with meaningful improvements in the AUC.

The test dataset has 800 observations and we allow $T = 15$ adaptive tests.
At each time point, the model developer receives a new observation and refits the model.
To spend alpha more judiciously, the model developer will only submit the refitted model if the power calculations suggest that the probability for rejecting the null hypothesis exceeds 50\%.
Specifically, they perform power calculations by setting the true performance improvement to the CI lower bound, which is estimated using split-sample validation.
(For simplicity, the power calculations do not perform any multiple testing correction.)
To run \texttt{presSRGP}, the prespecified model updating procedure also selects updates based on a hypothesis test similar to \eqref{eq:null_delta} but replacing the adaptive difference sequence $\delta_j^{\adapt}$ with the prespecified difference sequence $\delta_j^{\prespec} = 0.0025 (j - 1)$ as well as replacing the adaptive modifications with the prespecified ones.

The procedures differed significantly in power (Figure~\ref{fig:power}).
On average, \texttt{Bonferroni} approved two modification, \texttt{BonfSRGP} approved 4.5, \texttt{fsSRGP} approved 5, and \texttt{presSRGP} approved 5.5.
By the end of the testing procedure, the average AUC of the approved model by \texttt{presSRGP} was 0.8 whereas \texttt{Bonferroni} only attained an AUC of 0.75.
Finally, the detected performance improvements was highest using \texttt{presSRGP}, as compared to the other methods.

\begin{figure}
	\includegraphics[width=0.5\textwidth]{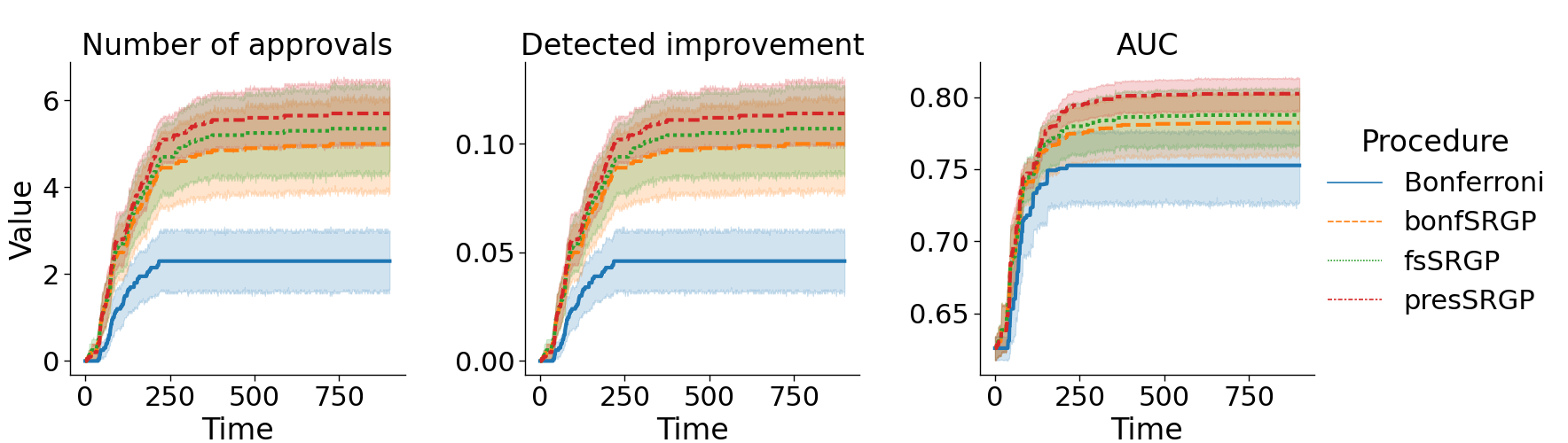}
	\caption{
		Comparison of multiple testing procedures (MTPs) for approving continually refitted models on a stream of IID data.
		Left: number of approved modifications.
		Middle: increase in AUC detected by MTPs.
		Right: AUC of the most recently approved modifications.
	}
	\label{fig:power}
\end{figure}

\vspace{-0.3cm}
\section{Revising predictions for acute hypotension episodes}
\label{sec:eci}

We now apply our procedure for approving modifications to a risk prediction model for acute hypotension episodes (AHEs), one of the most frequent critical events in the intensive care unit (ICU) \citep{Walsh2013-ur}.
The ICU is a clinical environment that continuously generates high throughput data. Thus, a model developer can readily collect new data in this setting to retrain an existing model.
To mimic this, we use data from the eICU Collaborative Research Database \citep{Pollard2018-mc}.
We train an initial model on 40 randomly selected admissions and simulate a data stream in which a randomly selected admission is observed at each time point.
The reusable test data is composed of 500 admissions and is used to evaluate $T=15$ modifications.

The task is to predict AHE 30 minutes in advance, where we define AHE as any 5-minute time period where the average mean arterial pressure (MAP) falls below 65 mmHg.
The input features to the model are baseline variables age, sex, height, and weight; vital signs MAP, heart rate, and respiration rate at the current time point; and the same set of vital signs five minutes prior.
The prediction model is a gradient boosted tree (GBT) and is continually refit on the incoming data.

Here we consider a more complex hypothesis test that checks for calibration-in-the-large \citep{Steyerberg2009-ze} \textit{and} improvement in AUC.
The $j$-th adaptive null hypothesis is
\begin{align}
	\begin{split}
H_{0,j}^{adapt}:
& \ \psi\left (\hat{f}^{adapt}_j; P_0 \right) \le \psi\left (\hat{f}_0; P_0 \right) + \delta^{adapt}_j \\
& \text{or } E\left[\hat{f}^{adapt}_j(X) - Y \right] \not\in [-\epsilon, \epsilon],
	\end{split}
\label{eq:eicu_hypo}
\end{align}
where $\delta^{adapt}_j$ is defined using the same procedure as that in Section~\ref{sec:sim_power}, $\hat{f}^{adapt}_j$ is the modification determined to have sufficient power for rejecting the null, and margin of error $\epsilon$ is 0.05.
We will refer to $E\left[\hat{f}^{adapt}_j(X) - Y \right]$ as calibration-error-in-the-large.
Details on calculating the test statistic and significance thresholds are provided in the Appendix.

Results from 40 replicates are shown in Figure~\ref{fig:eicu}.
We observe the same ranking of MTPs as that in Section~\ref{sec:sim_power}: \texttt{prespecSRGP} performed the best, followed by \texttt{fsSRGP}.
Compared to the previous section, the relative improvement between the methods is smaller because the GBTs improved rapidly at early time points and slowed down thereafter.

\begin{figure}
	\centering
	\includegraphics[width=0.4\textwidth]{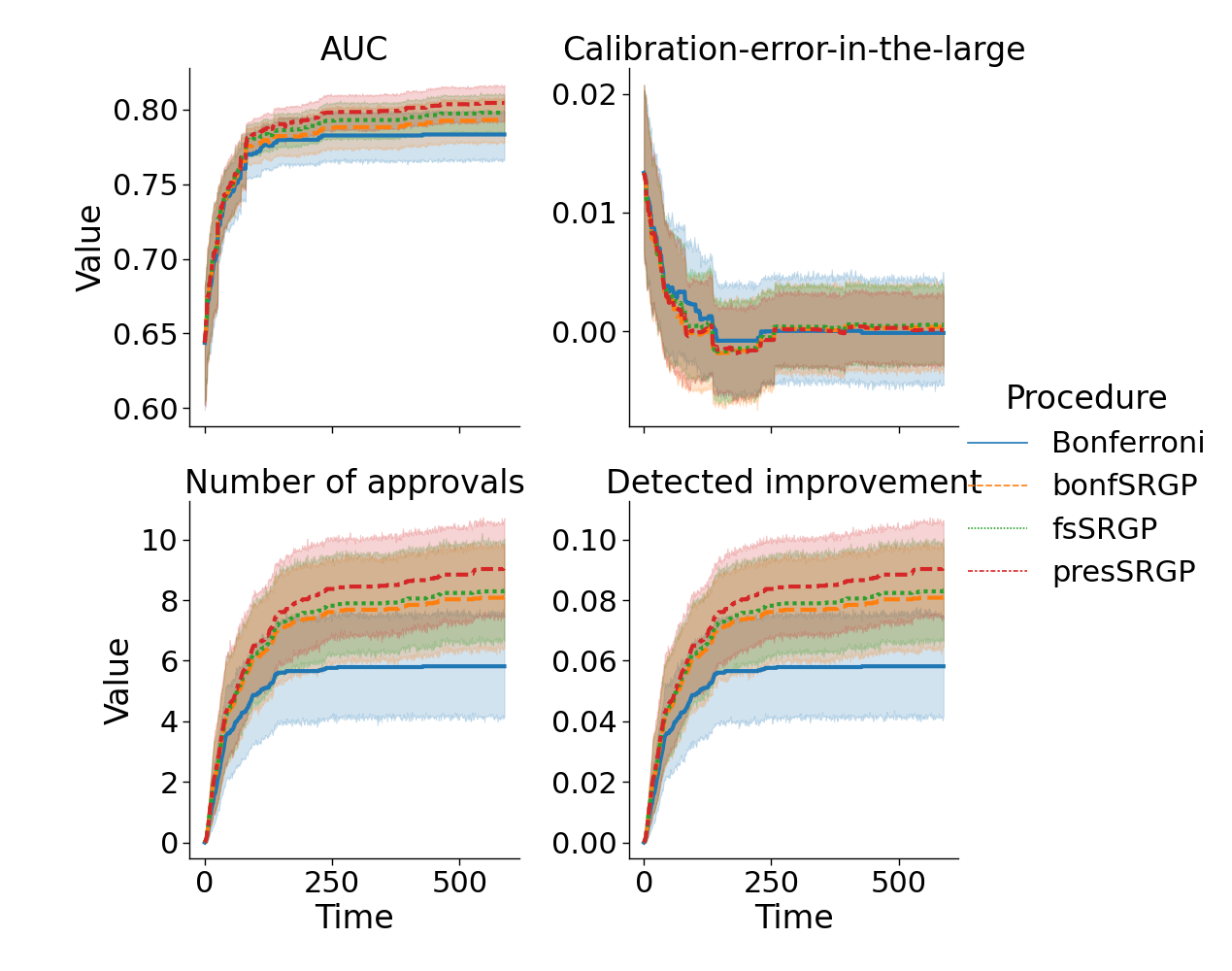}
	\vspace{-0.25cm}
	\caption{Approving refitted gradient boosted trees for predicting acute hypotension episodes (AHEs), checking that the calibration-error-in-the-large is close to the ideal value of zero and that the AUC is improving.}
	\label{fig:eicu}
\end{figure}

\vspace{-0.3cm}
\section{Related Work}
Our paper relates to a large body of work on methods for providing valid statistical inference and preventing false discoveries.
Much of this literature has focused on testing prespecified hypotheses on the same dataset while controlling the FWER \citep{Hochberg1987-db, Westfall2010-it}, false discovery rate (FDR) \citep{Benjamini1995-xt}, or some variant thereof \citep{Van_der_Laan2004-qz}.
More recent works consider testing a sequence of adaptive hypotheses on prospectively-collected data from a data stream and controlling online error rates \citep{Foster2008-ek, Ramdas2018-gh}.
This work considers the setting where we adaptively test hypotheses on the \textit{same} dataset.
To control the bias, testing procedures must limit the amount of information released about the test dataset \citep{Russo2016-ms}.
Techniques based on differential privacy, which is a mathematically rigorous formalization of data privacy \citep{Dwork2014-ot}, do this by adding random noise (e.g. Laplace or Gaussian noise) to the test statistic or, more generally, the queried result \citep{Dwork2015-da}.
While theoretical guarantees are available for differential privacy based methods for test data reuse (e.g., \citep{Dwork2015-ho, Russo2016-ms, Rogers2016-mt, Cummings2016-hu, Dwork2017-zf, Feldman2017-xi, Feldman2018-eo, Shenfeld2019-gq, Gossmann2021-qk} and others), the required size of the test dataset is prohibitively large for many application domains or require injecting very large amounts of noise \citep{Rogers2019-br, Gossmann2021-qk}.
An alternative approach is to directly limit the number of bits of information released to the model developer by discretizing the queried result along some grid \citep{Blum2015-hv}.
Existing methods essentially perform a Bonferroni correction for the number of distinct hypotheses, which also require unreasonably large test datasets for many applications.
To improve testing power, a number of works have assumed that the adaptivity of the model developer is limited (e.g. the models are highly correlated, or the model developer is not entirely ``adversarial'') to justify the use of a less conservative correction factor \citep{Mania2019-pt, Zrnic2019-lt}.
In contrast, the SRGPs proposed in this work achieve higher power via alpha-recycling and account for the correlation structure without needing to make assumptions about the model developer.
\vspace{-0.4cm}
\section{Conclusion}
\vspace{-0.1cm}

We show how to leverage SRGPs to design valid and powerful approaches for testing a sequence of adaptively-defined algorithmic modifications on the same dataset.
The overall steps of this framework are (i) limit the amount of information leakage by reporting only binary test results (approve versus deny modifications), (ii) spend \textit{and recycle} alpha using an SRGP, and (iii) design consonant, closed-testing procedures whose significance thresholds can be computed without needing to observe the counterfactual hypotheses.
To account for correlation between the algorithmic modifications, we presented two new SRGPs.
\texttt{fsSRGP} achieves higher power by leveraging the correlation structure between the observed algorithmic modifications.
\texttt{presSRGP} asks the model developer to generate a sequence of algorithmic modifications using a prespecified learning procedure and leverages the correlation between the adaptive and prespecified algorithmic modifications.
In empirical studies, these procedures approved more algorithmic modifications than existing methods, with \texttt{presSRGP} achieving the highest power.

One direction of future work is to optimize the power for approving algorithmic modifications by (i) tuning the node and edge weights in the SRGP and (ii) exploring various testing strategies that the model developer can employ.
In addition, model developers are often interested in obtaining more detailed test results like p-values and confidence intervals.
Another direction for future work is to design SRGPs that release more information per iteration, perhaps by leveraging differential privacy techniques.

\begin{acknowledgements}
We thank Noah Simon, Charles McCulloch, and Zhenghao Chen for helpful discussions and suggestions.
We are grateful to Nicholas Fong for sharing cleaned data for the acute hypotension episode example.

This  work  was  supported  by  the  Food  and  Drug  Administration (FDA)  of  the  U.S.  Department  of  Health  and  Human  Services (HHS) as part of a financial assistance award Center of Excellence in Regulatory Science and Innovation grant to University of California, San Francisco (UCSF) and Stanford University, U01FD005978 totaling \$79,250 with 100\% funded by FDA/HHS. The contents are those of the author(s) and do not necessarily represent the official views of, nor an endorsement, by FDA/HHS, or the U.S. Government.

\end{acknowledgements}

\bibliography{notes}

\newpage

\onecolumn

\appendix

\section{Proofs}

\begin{lemma}
	The adaptive SRGP in Algorithm~\ref{algo:graph_update} with a fixed strategy is equivalent to a prespecified SRGP.
	\label{lemma:equiv_prespec}
\end{lemma}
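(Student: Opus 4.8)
\emph{Proof proposal.} The plan is to show that under a \emph{fixed strategy} --- one in which the developer commits in advance to the entire bifurcating tree of nonadaptive models $\{\hat f_{a_t}\}$, to all initial node weights $w_{a_t}(I_0)$, and to all edge weights $g_{a_{\tau},a_t}$, none of which then depend on the observed data --- Algorithm~\ref{algo:graph_update} is nothing more than the sequentially rejective (``shortcut'') execution of an ordinary, fully prespecified SRGP in the sense of \citep{Bretz2009-bt}. First I would make the reduction precise by building the directed graph $\mathcal{G}$ whose nodes are the $2^T - 1$ hypotheses $H_{a_t}$, whose initial node weights are the prespecified $w_{a_t}(I_0)$ (nonnegative, and summing to one by the normalization required in Algorithm~\ref{algo:graph_update}), and whose edges are exactly those given by \eqref{eq:edges} carrying the prespecified weights $g_{a_{\tau},a_t}$; if the outgoing weights of some node sum to strictly less than one I would complete $\mathcal{G}$ by routing the slack to an auxiliary absorbing node that is never rejected, so that $\mathcal{G}$ is a legitimate SRGP graph. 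Since every one of these ingredients is data-independent under a fixed strategy, $\mathcal{G}$ together with the chosen local tests (the weighted Bonferroni--Holm tests of Section~\ref{sec:bonf_SRGP}, or the correlation-adjusted variants) defines a closed testing procedure prior to observing any test statistic.

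Next I would invoke the consonance/monotonicity already established for these SRGPs (condition \eqref{eq:monotonicity}, following \citep{Bretz2009-bt}), which guarantees that the closed test on $\mathcal{G}$ admits a valid sequentially rejective shortcut: at each step one tests some still-unrejected elementary hypothesis against its current adjusted threshold, rejects it and propagates its weight via \eqref{eq:weight_prop} when the marginal p-value clears the threshold, and the set of hypotheses ultimately rejected does not depend on the order in which admissible steps are taken. It then remains to check two things: that the particular order used by Algorithm~\ref{algo:graph_update} --- walk down the observed path, moving from $H_{a_t}$ to $H_{(a_t,1)}$ on a rejection and to $H_{(a_t,0)}$ otherwise --- is an admissible order for this shortcut, and that the quantities Algorithm~\ref{algo:graph_update} manipulates along that path agree with those of the shortcut on $\mathcal{G}$.

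The crux, and the step I expect to require the most care, is the identity that at time $t$ the current weight of $H_{a_t}$ in $\mathcal{G}$ equals $w_{a_t}(I_0) + g_{a_{\tau_t},a_t}\, w_{a_{\tau_t}}(I_{\tau_t})$ --- precisely the weight-propagation line of Algorithm~\ref{algo:graph_update} --- and in particular that no earlier rejection other than the most recent one, $H_{a_{\tau_t}}$, contributes to it. Here the tree structure of \eqref{eq:edges} does the work: the nodes rejected before time $t$ are exactly the path nodes $H_{a_{\tau_2}}, H_{a_{\tau_3}}, \dots$ at the successive successes, and among these only $H_{a_{\tau_t}}$ has a nonzero outgoing edge reaching $H_{a_t}$, since $a_t$ lies on the streak of failures immediately following the $\tau_t$-th success and is therefore not reachable from any earlier rejected node without first passing through a node that has not yet been rejected. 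Consequently the shortcut's weight for $H_{a_t}$ never depends on the unobserved counterfactual branches or on edge weights not yet revealed, and \texttt{compute\_sig\_threshold} --- which under a fixed strategy has access to every node weight it could conceivably need --- returns exactly the adjusted threshold the prespecified SRGP would use. A straightforward induction on $t$ then shows the two procedures make identical rejection decisions, maintain identical graphs $I_t$, and use identical thresholds, so Algorithm~\ref{algo:graph_update} with a fixed strategy coincides with the prespecified SRGP on $\mathcal{G}$, and its FWER control at level $\alpha$ is inherited from the closed-testing guarantee for $\mathcal{G}$.
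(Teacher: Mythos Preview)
Your argument is correct, but it takes a genuinely different route from the paper's own proof. The paper gives a short measurability argument: it introduces the filtration $(\mathcal{F}_t)$ over approval histories, observes that the edge weight elicited at time $t$ equals $g_{a_{\tau_t},a_t}(I_0)$ because no rejection has occurred between $\tau_t$ and $t$ and hence no edge-renormalization has touched it, and concludes that all hypotheses, node weights, and edge weights can be regarded as $\mathcal{F}_1$-measurable functions of the approval history---i.e., fully prespecified. That is the whole proof; it never traces the execution or invokes consonance.

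Your approach is instead operational: you build the graph $\mathcal{G}$ explicitly and verify by induction that Algorithm~\ref{algo:graph_update}'s weight-propagation line reproduces the SRGP shortcut on $\mathcal{G}$, relying on the tree structure of \eqref{eq:edges} to argue that $H_{a_{\tau_t}}$ is the unique node with an edge into $H_{a_t}$. This is correct and in some ways more informative, since it makes the step-by-step correspondence explicit rather than leaving it implicit in ``$\mathcal{F}_1$-measurable.'' The cost is that you do more than the lemma asks: your appeal to consonance/monotonicity and the concluding FWER statement properly belong to the Corollary and Theorems~\ref{thrm:ffs}--\ref{thrm:parallel}, which in the paper's organization \emph{use} Lemma~\ref{lemma:equiv_prespec} as a reduction and only then establish monotonicity for the prespecified graph. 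There is no actual circularity (monotonicity of the prespecified SRGP is proved directly, without recourse to the lemma), but your write-up blurs the separation the paper maintains between the structural reduction (this lemma) and the validity of the resulting closed test (the later results). If you strip out the consonance step and the FWER conclusion, what remains---the unique-incoming-edge argument and the inductive matching of weights---is a clean alternative proof of exactly the lemma's claim.
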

\begin{proof}
	We define a filtration over approval histories up to the maximum number of iterations $T$.
	That is, define sample space $\Omega$ as the set of approval histories over $T$ iterations, i.e. $\Omega = \{0,1\}^{T - 1}$, and $\sigma$-algebras $\mathcal{F}_t$ for $t = 1,...,T$ over approval histories up to iteration $t$.
	To show that the adaptive SRGP is equivalent to a prespecified SRGP, we need to show that that the adaptive procedure defines a set of hypotheses, node weights, and edge weights for the initial set of hypotheses $I_0$, the hypotheses and weights are $\mathcal{F}_1$-measurable functions, and the weight constraints are satisfied.
	First, we note that the edge weights being elicited at iteration $t$ in Algorithm~\ref{algo:graph_update} is equivalent to eliciting the edge weights for the initial set of hypotheses $I_0$, i.e. $g_{a_{t'}, a_t} = g_{a_{t'}, a_t}(I_0)$ in Algorithm~\ref{algo:graph_update}.
	This is because we only elicit the edge weight $g_{a_{\tau_t}, a_t}$ if there has been no approval since time $\tau_t$ so the edge weights being elicited are never updated via the edge-weight renormalization step in SRGPs.
	As such, the adaptive SRGP in Algorithm~\ref{algo:graph_update} for a model developer with a fixed strategy for selecting hypotheses and weights can be described to have a fixed hypothesis testing tree structure with
	\begin{itemize}
		\item $\mathcal{F}_t$-measurable hypotheses $H_{a_{t}}(I_0)$ for all $a_{t}$
		\item $\mathcal{F}_1$-measurable node weights $w_{a_t}(I_0)$ for all $a_t \in \{0,1\}^{T - 1}$ that satisfy the constraint that they sum to one,
		\item and $\mathcal{F}_t$-measurable edge weights $g_{a_{t'}, a_t}(I_0)$ for all valid edges $(a_{t'}, a_t)$ in the graph that satisfy the constraint that all outgoing edge weights sum to one.
	\end{itemize}
	Although the hypotheses and edge weights are $\mathcal{F}_t$-measurable, they can also be viewed as $\mathcal{F}_1$-measurable functions over the input space $a_t$ and $(a_{t'}, a_t)$, respectively.
	Moreover, the edge weights satisfy the edge weight constraints by design.
	Thus the adaptive SRGP satisfies the node and edge weights constraints with respect to $\mathcal{F}_1$.

\end{proof}

\begin{lemma}
	If the adaptive SRGP in Algorithm~\ref{algo:graph_update} controls the FWER for any fixed strategy, then the adaptive SRGP in Algorithm~\ref{algo:graph_update} controls the FWER for any stochastic strategy.
	\label{lemma:stochastic}
\end{lemma}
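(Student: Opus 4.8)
The plan is to reduce the stochastic case to the already-established fixed-strategy case by conditioning on the developer's internal randomness. First I would represent an arbitrary stochastic strategy as a deterministic strategy driven by an auxiliary random variable $U$ taking values in some standard Borel space $\mathcal{U}$, where $U$ is drawn independently of the test data $(X_1,Y_1),\ldots,(X_n,Y_n)$. Concretely, at each iteration $t$ the developer's choice of the modification $\hat{f}_t^{\adapt}$, the node weights $w_{a_{t'}}(I_0)$, and the edge weight $g_{a_{\tau_t},a_t}$ can be written as the image of a fixed measurable map applied to the observed approval history $a_t$ and to $U$ (every randomized decision rule admits such a representation). For each realization $U=u$ the induced rule is a genuine \emph{fixed} strategy in the sense used by Lemma~\ref{lemma:equiv_prespec}, so conditional on $U=u$ the execution of Algorithm~\ref{algo:graph_update} is exactly the fixed-strategy procedure whose FWER control is assumed.

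Next I would invoke the hypothesis conditionally. Fix $u\in\mathcal{U}$. Because $U$ is independent of the data, the conditional law of $(X_i,Y_i)_{i=1}^n$ given $U=u$ is still the i.i.d.\ target-population law, so running Algorithm~\ref{algo:graph_update} with the fixed strategy indexed by $u$ controls the FWER at level $\alpha$. Here it is essential that the assumed fixed-strategy guarantee is \emph{strong} FWER control, i.e.\ valid for every configuration of true and false null hypotheses along the tree: the identity of the unacceptable modifications generally depends on $u$ (since the modifications themselves do), but strong control yields $\Pr(A\mid U=u)\le\alpha$ whatever that configuration happens to be, where $A$ denotes the event that at least one unacceptable modification is approved.

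Finally I would average over $U$ using the tower property:
\begin{align*}
	\Pr(A) = \mathbb{E}_U\!\left[\,\Pr(A\mid U)\,\right] \le \mathbb{E}_U[\alpha] = \alpha,
\end{align*}
which is the claimed conclusion.

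The main obstacle is one of rigor rather than of idea. The substantive point is to make precise that (i) any stochastic developer strategy admits the ``deterministic map plus independent seed $U$'' representation, with all objects jointly measurable and with $U$ genuinely independent of the test data (this is where one must be careful that the stochasticity does not secretly depend on the data beyond the binary reports already accounted for in the tree), and (ii) the regular conditional probability $\Pr(A\mid U)$ is well defined, which follows from working on standard Borel spaces. Once that measure-theoretic scaffolding is in place, the conditioning step reduces to the fixed-strategy result by construction, and the integration over $U$ is immediate.
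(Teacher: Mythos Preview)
Your proposal is correct and follows essentially the same approach as the paper: represent the stochastic strategy as a mixture over fixed strategies, condition on the realized strategy so that the fixed-strategy FWER bound applies, and then average using the tower property. The paper's proof is terser (writing the mixture as a discrete sum over strategies) while you add measure-theoretic care about the representation via an independent seed $U$ and strong FWER control, but the argument is the same in substance.
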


\begin{proof}
	Let $\mathcal{S}$ be the set of all fixed strategies. The stochastic adaptive strategy is a random distribution over $\mathcal{S}$.
	Its FWER is
	\begin{align*}
		\Pr\left(\text{incorrectly reject some } H_{t}^{\adapt} \right)
		= \sum_{s \in \mathcal{S}} \Pr(S = s) \Pr\left(\text{incorrectly reject some } H_{t}^{\adapt} \mid S = s \right)
	\end{align*}
	where the latter probability on the right hand side is the FWER for a fixed strategy $s$.
	As such, the FWER of the stochastic strategy is properly controlled as long as the FWER of any fixed strategy is properly controlled.
\end{proof}

\begin{corollary}
	Algorithm~\ref{algo:graph_update} with the significance thresholds defined per
	\begin{equation}
		 c_{a_{t}}(I_{t}) = w_{a_{t}}(I_{t}) \alpha
		 \label{eq:srgp_bonf_crit}
	\end{equation}
	controls the FWER at level $\alpha$.
\end{corollary}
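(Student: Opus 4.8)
The plan is to reduce the claim to the general theory of graph-based closed testing and then verify the one condition — the monotonicity condition \eqref{eq:monotonicity} — that makes that theory applicable in the adaptive setting. First I would invoke Lemma~\ref{lemma:stochastic} to reduce to an arbitrary but fixed model-developer strategy, and then Lemma~\ref{lemma:equiv_prespec} to view the run of Algorithm~\ref{algo:graph_update} under that fixed strategy as a fully prespecified SRGP on the tree of Figure~\ref{fig:tree}: a finite collection of elementary hypotheses $\{H_{a_t}\}$ with initial node weights $w_{a_t}(I_0)$ summing to one and outgoing edge weights summing to at most one. It then suffices to show that the weighted Bonferroni--Holm SRGP with local critical constants $c_{a_t}(I) = w_{a_t}(I)\alpha$ controls the FWER at level $\alpha$ on this prespecified tree, and that its ``shortcut'' form coincides with what Algorithm~\ref{algo:graph_update} actually executes along the single observed path.

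For the closed-testing part I would recall the closed-test recipe: for each index set $K$ containing $a_t$, test the intersection hypothesis $H_K$ locally by rejecting when $p_{a_k} \le w_{a_k}(K)\alpha$ for some $a_k \in K$, where the local weights $w_{a_k}(K)$ are produced from the initial graph by deleting every node outside $K$ using the propagation rule \eqref{eq:weight_prop} and renormalizing outgoing edges. Since $\sum_{a_k \in K} w_{a_k}(K) \le 1$, a union bound shows each local test has level $\alpha$ under $H_K$, so the closed test controls the FWER at $\alpha$ \citep{Lehmann2005-us}. Rejecting $H_{a_t}$ by the full closed test is in general stronger than the shortcut, but the two coincide once consonance holds, and consonance for graph-based procedures is implied by \eqref{eq:monotonicity} \citep{Bretz2009-bt}; this is the route I would take.

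The only thing left to check is therefore \eqref{eq:monotonicity} for $c_{a_t}(I) = w_{a_t}(I)\alpha$, which amounts to $w_j(J) \le w_j(K)$ whenever $K \subseteq J$ and $j \in K$. The key observation is that the local weights for $K$ can be obtained from those for $J$ by further deleting the nodes in $J \setminus K$, and by \eqref{eq:weight_prop} every such deletion only adds the nonnegative increment $g_{\cdot, j}(\cdot)\, w_{\cdot}(\cdot)$ to a surviving node $j$ (edge renormalization likewise never decreases a weight); order-independence of the deletions lets one carry this out one node at a time. Hence $w_j(K) \ge w_j(J)$, which is \eqref{eq:monotonicity}. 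With monotonicity in hand, consonance holds, so by induction on the number of rejections the shortcut rejects exactly the elementary hypotheses that the closed test rejects, and — crucially — the shortcut threshold for $H_{a_t}$ equals $w_{a_t}(I_t)\alpha$, which depends only on the node weight of the hypothesis currently being tested and is thus computable from the observed path alone. Combining this with the closed-test FWER bound completes the argument.

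I expect the main obstacle to be bookkeeping rather than conceptual: making precise that the $I_t$ generated by Algorithm~\ref{algo:graph_update} along the realized path is exactly the sequence of surviving-hypothesis sets produced by the closed-test shortcut (so that the sequentially revealed edge weights and the never-observed counterfactual branches are genuinely irrelevant), and confirming that edge renormalization after each rejection preserves both the ``weights sum to one'' invariant and monotonicity. All of this is a specialization of \citet{Bretz2009-bt, Bretz2011-hd}, so the proof is essentially a matter of checking that their hypotheses hold here; the adaptive wrinkle is already absorbed by Lemmas~\ref{lemma:equiv_prespec} and~\ref{lemma:stochastic}.
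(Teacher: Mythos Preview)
Your proposal is correct and follows essentially the same route as the paper: reduce to a fixed strategy via Lemma~\ref{lemma:stochastic}, identify the run with a fully prespecified SRGP via Lemma~\ref{lemma:equiv_prespec}, and then invoke the weighted Bonferroni closed test of \citet{Bretz2009-bt,Bretz2011-hd}. The only difference is granularity: the paper simply cites \citet{Bretz2011-hd} for the prespecified result, whereas you additionally spell out the closed-test level-$\alpha$ bound and verify the monotonicity condition \eqref{eq:monotonicity} directly from the propagation rule \eqref{eq:weight_prop}, which is harmless extra detail.
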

\begin{proof}
	Per Lemmas~\ref{lemma:equiv_prespec} and \ref{lemma:stochastic}, it suffices to show that the fully prespecified SRGP controls the FWER.
	Recall that \eqref{eq:srgp_bonf_crit} is a closed weighted Bonferroni test in \citet{Bretz2011-hd}.
	As such, any fixed or stochastic adaptive strategy would control FWER.
\end{proof}

\begin{proof}[Proof for Theorem~\ref{thrm:ffs}]
	Per Lemmas~\ref{lemma:equiv_prespec} and \ref{lemma:stochastic}, it suffices to show that the fully prespecified SRGP controls the FWER.

	First, per the proof in \citet{Bretz2009-bt}, we note that node weights for any intersection hypothesis $I$ calculated using Algorithm~\ref{algo:graph_update} are well-defined, in that it does not depend on ordering in which we remove nodes from the graph.

	We begin with proving that for any intersection hypothesis $I$, the critical values calculated using \eqref{eq:sig_thres_corr} controls the Type I error.
	First we show that for any $a_t$ ending with success (i.e. $a_{t, t - 1} = 1$) and any $I$, the calculated critical values for testing the intersection hypotheses $\cap_{a_k \in G_{a_t} \cap I} H_{a_k}$ controls the Type I error at level $\left ( \sum_{a_{k} \in G_{a_t} \cap I} w_{a_k}(I)\right) \alpha$.
	Per the definition of the critical values in \eqref{eq:sig_thres_corr}, we have that
	\begin{align*}
		& \Pr \left (\text{we reject for some } a_j \in G_{a_t} \cap I |
		H_{G_{a_t} \cap I}
		\right)\\
		= &
		\sum_{a_{j} \in (G_{a_t} \cap I)} \Pr\left(p_{a_k} > c_{a_k}(I) \forall a_{k} \in G_{a_t} \cap I, k< j, p_{a_j} < c_{a_j}(I) | \cap_{a_{k} \in G_{a_t} \cap I, k\le j} H_{a_{k}} \right)\\
		\le & \left ( \sum_{a_{j} \in (G_{a_t} \cap I)} w_{a_j}(I)\right) \alpha.
	\end{align*}
	Therefore, as long as the total node weight across $I$ is no more than one, we control the Type I error at level $\alpha$.
	Because Type I error control holds for all intersection hypotheses $I$, we have established that this procedure is a valid closed test.

	Next, per the proof  in \citet{Bretz2009-bt}, we must show that the critical values satisfy the monotonicity condition to prove that our procedure is a valid consonant, shortcut procedure.
	More specifically, we require the following to hold for all $t = 1,...,T$:
	\begin{equation}
		c_{a_t}(I) < c_{a_t}(J) \qquad \forall J\subseteq I.
		\label{eq:monotonic_spec}
	\end{equation}
	The proof is by induction.
	It is easy to see that \eqref{eq:monotonic_spec} holds for $t = 1$.
	Suppose \eqref{eq:monotonic_spec} holds for $1,...,t - 1$.
	Now consider any history $a_{\tilde{t}}$ that ends with an approval.
	Consider any $a_t$ and subset $J \subseteq I$ such that $a_t \in G_{a_{\tilde{t}}} \cap J$.
	We have that
	\begin{align*}
		& c_{a_t}(J)\\
		=& \sup\left\{
		\tilde{c}:
		\Pr\left(p_{a_k} > c_{a_k}(J) \forall a_k \in K, p_{t} < \tilde{c} | H_{K \cup \{a_t\}} \right)
		\le \left[\sum_{\substack{a_k \in ((G_{a_{\tilde{t}}} \cap J) \setminus K) \\ k \le t}}
		w_{a_k} (J)\right] \alpha
		\forall K \subseteq \{a_k: a_k \in G_{a_{\tilde{t}}} \cap J, k < t \}
		\right\}\\
		\ge & \sup\left\{
		\tilde{c}:
		\Pr\left(p_{a_k} > c_{a_k}(I) \forall a_k \in K, p_{t} < \tilde{c} | H_{K \cup \{a_t\}} \right)
		\le \left[
		\sum_{\substack{a_k \in ((G_{a_{\tilde{t}}} \cap J) \setminus K) \\ k \le t}}
		w_{a_k} (J)\right] \alpha
		\forall K \subseteq \{a_k: a_k \in G_{a_{\tilde{t}}} \cap J, k < t \}
		\right\}\\
		\ge&  \sup\left\{
		\tilde{c}:
		\Pr\left(p_{a_k} > c_{a_k}(I) \forall a_k \in K, p_{t} < \tilde{c} | H_{K \cup \{a_t\}} \right)
		\le \left[
		\sum_{\substack{a_k \in ((G_{a_{\tilde{t}}} \cap I) \setminus K) \\ k \le t}}
		w_{a_k} (I)\right] \alpha
		\forall
		K \subseteq \{a_k: a_k \in G_{a_{\tilde{t}}} \cap I, k < t \}
		\right\}\\
		= &\   c_{a_t}(I)
	\end{align*}
	where the first inequality follows by induction and the second inequality is because the weights are monotonic.
\end{proof}

\begin{proof}[Proof for Theorem~\ref{thrm:parallel}]
	Per Lemmas~\ref{lemma:equiv_prespec} and \ref{lemma:stochastic}, it suffices to show that the fully prespecified SRGP controls the FWER.

	We first prove that the critical values per \eqref{eq:adapt_err} control the Type I error for any intersection hypothesis $I$.
	For any $I$, define $\tilde{I}$ as the union of $I$ and all prespecified nodes.
	Then the Type I error can be bounded using a sequence of union bounds:
	\begin{align*}
		& \Pr\left(
		\exists (t, a_t) \in I \text{ s.t. } p_{a_t} < c_{a_t}({I}) \mid H_{I}
		\right)\\
		\le & \Pr\left(
		\exists t \text{ s.t. } \xi_{t,n}^{\prespec} \le z_{t}^{\prespec}({I}) \text{ OR }  \exists (t, a_t) \in I \text{ s.t. } p_{a_t} < c_{a_t}({I}) \mid H_{I}
		\right)\\
		\le &
		\sum_{t = 1}^\infty
		\left[
		\Pr\left(
		\xi_{t',n}^{\prespec} > z_{t'}^{\prespec}({I}) \forall t' \le t - 1,
		\xi_{t,n}^{\prespec} \le z_{t}^{\prespec}({I}) \mid H_{I}
		\right)
		+
		\sum_{a_t \in I}
		\Pr\left(
		\xi_{t',n}^{\prespec} > z_{t'}^{\prespec}({I}) \forall t' \le t,
		p_{a_t} < c_{a_t}({I})\mid H_{I}
		\right)\right]\\
		\le &
		\left (\sum_{t = 1}^\infty \left(w_{t}^{\prespec}\left ( \tilde{I}\right)
		+ \sum_{a_t \in I} w_{a_t}\left(\tilde{I}\right ) \right) \right) \alpha\\
		=&\alpha.
	\end{align*}
	Because the weights are nondecreasing in Algorithm~\ref{algo:graph_update}, the critical values defined in \eqref{eq:adapt_err} satisfy the monotonicity condition.
	As such, Algorithm~\ref{algo:graph_update} is a consonant, short-cut procedure for the above closed test.

\end{proof}

\section{Hypothesis test details}

\subsection{Testing for an improvement in AUC}
\label{sec:auc_compare}

In Section~\ref{sec:sim}, we decide whether or not to approve a modification by testing the adaptively-defined null hypothesis \eqref{eq:null_delta} at each iteration $j$, which compares the AUC between the $j$th adaptively proposed model and the initial model.
Per Algorithm~\ref{algo:graph_update}, we test the adaptive hypotheses by treating them as pre-specified hypotheses from a bifurcating tree, i.e.
\begin{equation}
	H_{0,a_j}: \psi\left (\hat{f}_{a_j}, P_0 \right) \le \psi\left (\hat{f}_0; P_0 \right) + \delta_{a_j}
	\label{eq:null_delta_fix}
\end{equation}
for approval histories $a_j$.
We now describe how the test statistics and significance thresholds are constructed.

Recall that the AUC is equal to the Mann-Whitney U-statistic for comparing ranks across two populations, i.e.
\begin{align}
	\psi(f, P_0) = P_0 \left (f(X_1) > f(X_2) \mid Y_1 = 1, Y_2 = 0 \right),
\end{align}
where $(X_1,Y_1)$ and $(X_2,Y_2)$ represent independent draws from $P_0$.
The empirical AUC is defined as
\begin{equation}
	\psi(f, P_n) = \frac{1}{n_0 n_1} \sum_{i=1}^{n_0} \sum_{j=1}^{n_1} \mathbbm{1}\left\{f(X_j) > f(X_i)\right\} \mathbbm{1}\left\{Y_j = 1, Y_i = 0\right\},
	\label{eq:auc_emp}
\end{equation}
where $n_0$ is the number of observations with $Y = 0$ and $n_1 = n - n_0$.
To test \eqref{eq:null_delta_fix}, we characterize the asymptotic distribution of \eqref{eq:auc_emp} by analyzing its influence function.
Given IID observations from $P_0$, \eqref{eq:auc_emp} is an asymptotically linear estimator of the model's AUC \citep{LeDell2015-qz}, in that
\begin{equation}
	\psi(f, P_n) - \psi(f, P_0)
	= \frac{1}{n} \sum_{i=1}^n \phi(f, P_0)(X_i, Y_i) + o_p(1/\sqrt{n})
	\label{eq:auc_diff}
\end{equation}
with influence function
\begin{align*}
	\begin{split}
	\phi(f, P_0)(X_i, Y_i) =&
	\frac{\mathbbm{1}\{Y_i = 1\}}{P_0(Y = 1)} P_0\left(f(X) < c \mid Y= 0; c = f(X_i)\right)\\
	&\ + \frac{\mathbbm{1}\{Y_i = 0\}}{P_0(Y = 0)} P_0\left(f(X) > c \mid Y= 1; c = f(X_i)\right)\\
	&\ - \left\{
	\frac{\mathbbm{1}\{Y_i = 0\}}{P_0(Y = 0)}
	+ \frac{\mathbbm{1}\{Y_i = 0\}}{P_0(Y = 0)}
	\right\} \psi\left(f, P_0\right).
	\end{split}
\end{align*}
Per the Central Limit Theorem, we have that
\begin{equation}
\sqrt{n}\left(\psi(f, P_n) - \psi(f, P_0) \right) \rightarrow_d N\left(0, \sigma(f,P_0)^2 \right)
\end{equation}
where $\sigma(f,P_0)^2 = \Var(\phi(f, P_0)(X,Y))$.
We can then test the null hypothesis $H_0: \psi(\hat{f}_0, P_0) \le c $ for some constant $c$ based on the asymptotic normality of \eqref{eq:auc_emp}.
In addition, we can test \eqref{eq:null_delta_fix} by deriving the asymptotic distribution of $\psi\left (\hat{f}_{a_j}, P_0 \right) - \psi\left (\hat{f}_0; P_0 \right)$ based on the difference of the influence functions $\phi(\hat{f}_{a_j}, P_0)(X,Y) - \phi(\hat{f}_0, P_0)(X,Y)$.
To run \texttt{fsSRGP}, we can extend the above derivations to construct a fixed sequence test for testing a family of null hypotheses \eqref{eq:null_delta_fix} across multiple iterations $j$ by analyzing the  joint asymptotic distribution of the test statistics $\psi\left (\hat{f}_{a_j}, P_n \right) - \psi\left (\hat{f}_0; P_n \right)$ and compute the significance thresholds defined in \eqref{eq:sig_thres_corr}.
Similar logic can be used to derive the critical values \eqref{eq:pres_ci} and significance thresholds \eqref{eq:adapt_err} in \texttt{fsSRGP}.

\subsection{Testing model discrimination and calibration}

Section~\ref{sec:eci} considers the more complex hypothesis test \eqref{eq:eicu_hypo}, which checks for an improvement in AUC and calibration-in-the-large.
We implement this by testing three individual hypothesis tests using sequential gatekeeping.
First, we test that the difference between the average risk prediction and the observed event rate is no smaller than $-\epsilon$.
Next, we test that this difference is no larger than $\epsilon$.
Finally, we test for an improvement in AUC using the procedure described in Section~\ref{sec:auc_compare}.
To control the Type I error for rejecting the overall null hypothesis, we perform alpha spending across the individual hypotheses.

\end{document}